\newcommand{\name}{{MultiBodySync\xspace}}
\newcommand{\dataset}{DynLab\xspace}
\DeclareMathOperator*{\argmin}{argmin}
\newcolumntype{L}{>{\raggedright\arraybackslash}X}
\newcolumntype{C}{>{\centering\arraybackslash}X}
\newcommand{\x}{\bm{x}} 
\newcommand{\X}{\mathbf{X}} 
\newcommand{\Xset}{{\cal X}} 
\newcommand{\T}{\mathbf{T}} 
\newcommand{\Tset}{\mathcal{T}} 
\newcommand{\Tp}{\tilde{\mathbf{T}}} 
\newcommand{\Rp}{\tilde{\mathbf{R}}} 
\newcommand{\tp}{\tilde{\mathbf{t}}} 
\newcommand{\G}{\mathbf{G}} 
\newcommand{\Gset}{\mathcal{G}} 
\newcommand{\Gs}{\bm{g}} 
\newcommand{\Gw}{\tilde{\bm{g}}} 
\newcommand{\ps}{\bm{p}} 
\newcommand{\flow}{\mathbf{F}} 
\newcommand{\fpt}{\bm{f}} 
\newcommand{\Pm}{\mathbf{P}} 
\newcommand{\Pset}{\mathcal{P}} 
\newcommand{\Id}{\mathbf{I}} 
\newcommand{\R}{\mathbb{R}} 
\newcommand{\Rot}{\mathbf{R}} 
\newcommand{\tra}{\mathbf{t}} 
\newcommand{\Lap}{\mathbf{L}} 
\newcommand{\w}{\mathbf{w}} 
\newcommand{\Z}{\bm{\zeta}} 
\newcommand{\Znet}{\bm{\zeta}_\mathrm{net}} 
\newcommand{\Zm}{\mathbf{Z}} 
\newcommand{\Hw}{\mathbf{H}} 
\newcommand{\U}{\mathbf{U}} 
\newcommand{\K}{\mathbf{K}} 
\newcommand{\segnet}{\varphi_{\mathrm{mot}}} 
\newcommand{\denoisenet}{\varphi_{\mathrm{mlp}}} 
\newcommand{\flownet}{\varphi_{\mathrm{flow}}} 
\newcommand{\confnet}{\varphi_{\mathrm{conf}}} 
\newcommand{\Man}{\mathcal{M}} 
\newcommand{\one}{\mathbf{1}} 
\newcommand{\zero}{\mathbf{0}} 
\newcommand{\SEuc}{\mathrm{SE}(3)} 
\newcommand{\SOg}{\mathrm{SO}(3)} 
\newcommand{\D}{\mathbf{D}} 
\newcommand{\dvec}{\mathbf{d}} 
\newcommand{\E}{\mathbf{E}} 
\newcommand{\J}{\mathbf{J}} 
\newcommand{\gt}{\mathrm{gt}}   
\newcommand{\evals}{\bm{\Lambda} }
\newcommand{\ZW}{\tilde{\Zm} }
\newcommand{\evalw}{\tilde{\lambda} }
\newcommand{\UW}{\tilde{\U} }
\newcommand{\mydots}{...}
\newtheorem{thm}{Theorem}
\newtheorem{remark}{Remark}
\newtheorem{lemma}{Lemma}
\crefname{section}{\S}{\S\S}
\crefname{subsection}{\S}{\S\S}
\crefname{conj}{Conj.}{Conj.}
\Crefname{assumption}{\textbf{H}\hspace{-3pt}}{\textbf{H}\hspace{-3pt}}
\crefname{assumption}{\textbf{H}}{\textbf{H}}
\crefname{algorithm}{\text{Alg.}}{\text{Alg.}}
\crefname{assumption}{\textbf{H}}{\textbf{H}}
\crefname{equation}{\text{Eq}}{\text{Eq}}
\crefname{definition}{\text{Dfn.}}{\text{Dfn.}}
\crefname{lemma}{\text{Lemma}}{\text{Lemma}}
\crefname{dfn}{\text{Dfn.}}{\text{Dfn.}}
\crefname{thm}{\text{Thm.}}{\text{Thm.}}
\crefname{tab}{\text{Tab.}}{\text{Tab.}}
\crefname{fig}{\text{Fig.}}{\text{Fig.}}
\crefname{table}{\text{Tab.}}{\text{Tab.}}
\crefname{figure}{\text{Fig.}}{\text{Fig.}}
\crefname{section}{\text{Sec.}}{\text{Sec.}}
\newcommand{\parahead}[1]{\vspace{1.5mm}\noindent\textbf{#1.}\ }
\begin{document}

\title{\name:\\Multi-Body Segmentation and Motion Estimation via 3D Scan Synchronization}

\author{Jiahui Huang$^1$\enskip He Wang$^2$\enskip Tolga Birdal$^2$\enskip Minhyuk Sung$^3$\enskip Federica Arrigoni$^4$\enskip \\ Shi-Min Hu$^1$\enskip Leonidas Guibas$^2$\\
$^1$Tsinghua University\enskip $^2$Stanford University \enskip $^3$KAIST \enskip $^4$ University of Trento}

\maketitle

\begin{abstract}
\vspace{-7pt}
We present \name, a novel, end-to-end trainable multi-body motion segmentation and rigid registration framework for multiple input 3D point clouds.
The two non-trivial challenges posed by this multi-scan multi-body setting that we investigate are:
(i) guaranteeing correspondence and segmentation consistency across multiple input point clouds capturing different spatial arrangements of bodies or body parts; and
(ii) obtaining robust motion-based rigid body segmentation applicable to novel object categories.
We propose an approach to address these issues that incorporates spectral synchronization into an iterative deep declarative network, so as to simultaneously recover consistent correspondences as well as motion segmentation.
At the same time, by explicitly disentangling the correspondence and motion segmentation estimation modules, we achieve strong generalizability across different object categories.
Our extensive evaluations demonstrate that our method is effective on various datasets ranging from rigid parts in articulated objects to individually moving objects in a 3D scene, be it single-view or full point clouds.
Code at \url{https://github.com/huangjh-pub/multibody-sync}.

\end{abstract}

\vspace{-3pt}
\section{Introduction}
\vspace{-3pt}
\begin{figure}[t]
    \centering
    \includegraphics[width=\linewidth]{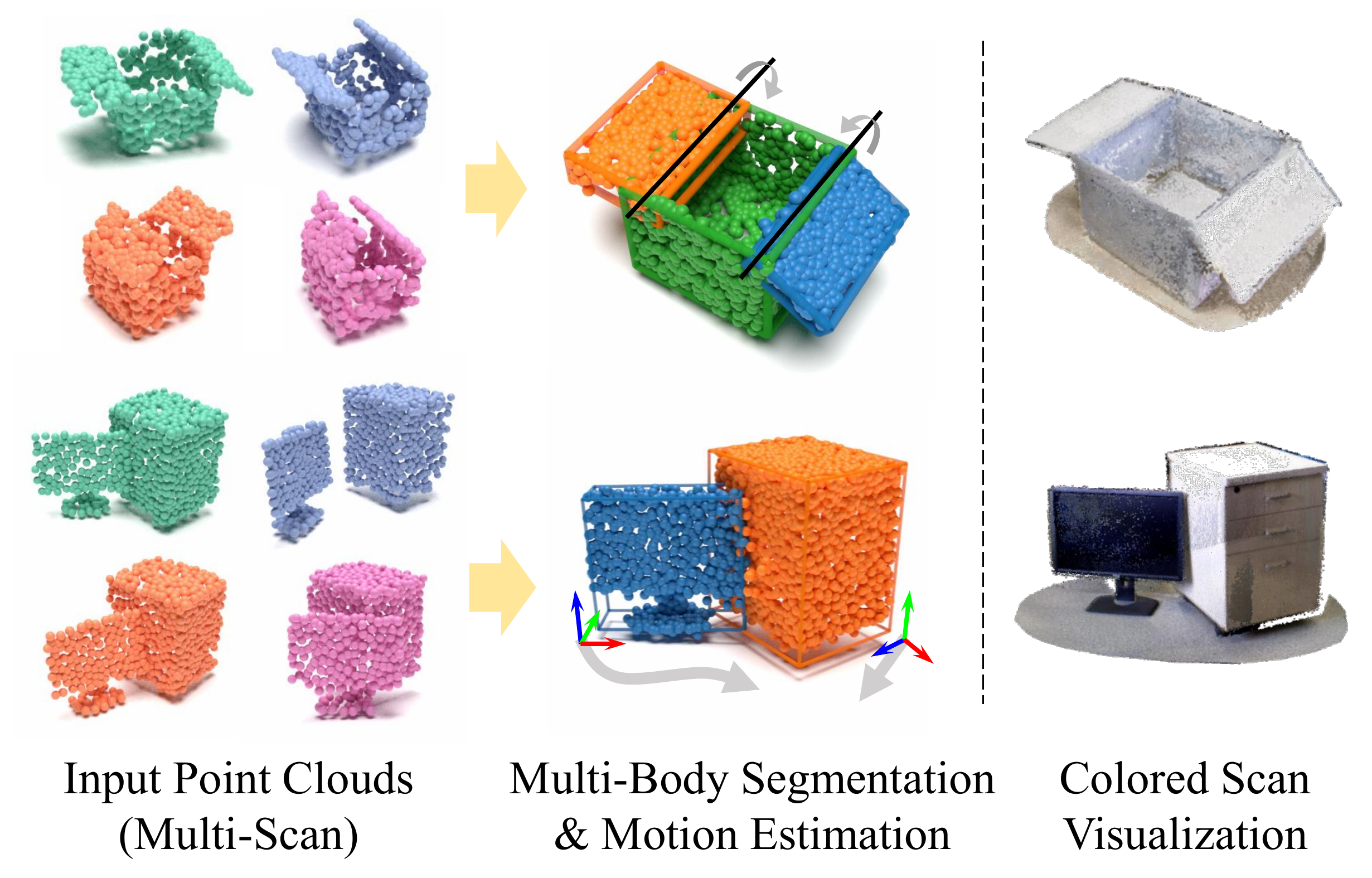}
    \caption{\name. Given an unordered set of point clouds, we simultaneously segment individual moving rigid parts/objects and register them. The transformed point clouds according to the first scan are aggregated and shown in the middle column. Note that the algorithm \emph{does not} use color information and the right column is shown just for visualization.\vspace{-3mm}}
    \label{fig:teaser}
\end{figure}

Motion analysis in dynamic point clouds is an emerging area, required by various applications such as surveillance, autonomous driving, and robotic manipulation.
Our human-made environments are dominated by \emph{rigid body} movements, ranging from articulated objects to solids like furniture or vehicles. 
These settings require us to address rigid motions of objects or object parts -- which is often referred to as the \emph{multi-body} motion estimation problem.
Despite its importance, previous work has mainly focused on specific scenarios with known category semantics, like category-level articulated object segmentation~\cite{li2020category}, indoor scene instance relocalization~\cite{wald2019rio}, or car movement detection~\cite{wu2020motionnet}, leaving the literature of generic motion segmentation relatively unexplored.

Different from traditional single scan analysis algorithms like semantic segmentation~\cite{landrieu2018large}, the most challenging part in multi-body motion analysis is to disambiguate and distinguish rigid bodies. There, we are naturally required to jointly process and relate \emph{multiple} inputs, to effectively find \emph{consistent} motion-based part/object segmentations as well as point correspondences to enable a multi-way registration.
It is even more challenging when the capture is not temporally dense, \ie,  an intermittent acquisition that does not follow a stream such as a video, and might contain large pose variations, hampering naive temporal tracking.

In this paper, we introduce a multi-scan multi-body segmentation and motion estimation problem, where the goal is to simultaneously discover and register rigid bodies from multiple scans, represented either as full or partial point clouds, where objects come from unseen categories.
As an effective solution, we present \name, a fully end-to-end trainable deep declarative architecture~\cite{Gould2019} able to process an arbitrary number of unordered point sets.
As shown in~\cref{fig:teaser}, given a set of scans, \name~begins relating pairs of scans via \emph{3D scene flow}~\cite{yi2018deep,vedula1999three} and confidence estimation. 
Then, the following two differentiable (permutation and segmentation) synchronization modules, which are central to our approach, respectively enforce the consistency of pairwise point correspondences and motion segmentation labelings across different scans.
Our design explicitly decouples geometry and motion, making \name~generalizable to \emph{unseen categories} without sacrificing robustness. 

We evaluate \name~on various datasets composed of full synthetic point clouds and partial real scans with articulated and solid objects. 
We also contribute a new dataset~\dataset with 8 scenes and 64 scan fragments of distinctly moving objects. 
Our extensive evaluations demonstrate that our algorithm outperforms the state-of-the-art by a large margin on both multi-body motion segmentation and motion estimation. 
In brief, our contributions are:
\vspace{-3pt}
\begin{enumerate}[noitemsep,leftmargin=\parindent]
\item We introduce a novel end-to-end trainable architecture for solving the multi-scan multi-body motion estimation and segmentation problem.
\item We theoretically analyze the spectral characteristics of the proposed weighted permutation synchronization.
\item To the best of our knowledge, we showcase the first cross-category generalization for the task at hand on both synthetic and real datasets, for both articulated part-level and object-level regimes.
\end{enumerate}

\section{Related Works}
\vspace{-3pt}

\parahead{Dynamic scene understanding}
The modeling of 3D dynamic scenes in deep learning literature is often formulated as a 4D data analysis, as done in seminal works like \cite{liu2019meteornet,choy20194d}. Ability to infer spatiotemporal geometric properties has recently motivated research in \emph{3D scene flow} as a form of low-level dynamic scene representation~\cite{liu2019flownet3d,tishchenko2020selfflow,wang2019flownet3d++,puy20flot,niemeyer2019occupancy,rempe2020caspr,ma2019deep}. 
Domain-specific knowledge can be employed to give better predictions as done in autonomous driving~\cite{hu2019joint,behl2019pointflownet,wu2020motionnet} or articulated object analysis~\cite{yan2020rpm,wang2019shape2motion}. The most recent dynamic SLAM works~\cite{huang2020clustervo,bescos2020dynaslam,zhang2020vdo,xu2019mid} also rely heavily on semantic cues.
While some works~\cite{niemeyer2019occupancy,rempe2020caspr} advocates continuous temporal-dynamics modeling, we instead assume discrete non-sequential input and enforce consistency using synchronization.
Similarly, \cite{halber2019rescan,wald2019rio} propose to perform instance-level re-localization in a changed scene. 
Nevertheless, we do not assume a pre-segmentation of the scene, but instead perform joint motion segmentation.

\parahead{Multi-body motion}
Provided point correspondences between two point clouds/images, rigid-body motion segmentation becomes a multi-model fitting problem, amenable for factorization techniques~\cite{costeira1998multibody,li2007projective,xu20193d}, clustering~\cite{huang2019clusterslam}, graph optimization~\cite{magri2019fitting,isack2012energy,birdal2017cad} or deep learning~\cite{kluger2020consac}.
Among others, \cite{yi2018deep} handles raw scans and segments the rigidly moving parts using a Recurrent Neural Network (RNN).
\cite{hayden2020nonparametric} fits non-parametric part models to sequential 3D data without needing explicit correspondences.
However, to our best knowledge, no prior work can handle multiple scans while enforcing multi-way consistency like we do.

\parahead{Synchronization} 
The art of consistently recovering absolute quantities from a collection of ratios is now a basic component of the classical multi-view/shape analysis pipelines~\cite{salas2013slam++,cadena2016past,carlone2015initialization}. Various aspects of the problem have been vastly studied: different group structures~\cite{govindu2014averaging,govindu2004lie,birdal2019probabilistic,arrigoni2017synchronization,Arrigoni2019,huang2019tensor,hartley2013rotation,Arrigoni2019,wang2013exact,chaudhury2015global,thunberg2017distributed,tron2014distributed,arrigoni2016spectral,bernard2015solution}, closed-form solutions~\cite{arrigoni2016spectral,arrigoni2017synchronization,Arrigoni2019}, robustness~\cite{chatterjee2017robust}, certifiability~\cite{rosen2019se}, global optimality~\cite{briales2017cartan}, learning-to-synchronize~\cite{huang2019learning,purkait2019neurora,gojcic2020learning} and uncertainty quantification~\cite{tron2014statistical,birdalSimsekli2018,birdal2020synchronizing,birdal2019probabilistic}. 
In this work, we are concerned with synchronizing correspondence sets, otherwise known as \emph{permutation synchronization} (PS)~\cite{pachauri2013solving} and motion segmentations~\cite{arrigoni2019motion}. PS is rich in the variety of algorithms: low-rank formulations~\cite{yu2016globally,wang2018multi}, convex programming~\cite{hu2018distributable}, distributed optimization~\cite{hu2018distributable}, multi-graph matching\cite{schiavinato2017synchronization} or Riemannian optimization~\cite{birdal2019probabilistic}. 
Out of all those, we are interested in the spectral methods of \cite{arrigoni2017synchronization,maset2017practical} as they provide efficient, closed-form solutions deployable within a \emph{deep declarative network}~\cite{Gould2019} like ours. 

To the best of our knowledge, synchronization of correspondences~\cite{maset2017practical} or {motion segmentation}~\cite{arrigoni2019motion} have not been explored in the context of deep learning. This is what we do in this paper to tackle the consistent multi-body motion estimation and segmentation.
\section{Method}
\vspace{-3pt}
\begin{figure*}[t]
    \centering
    \includegraphics[width=\linewidth]{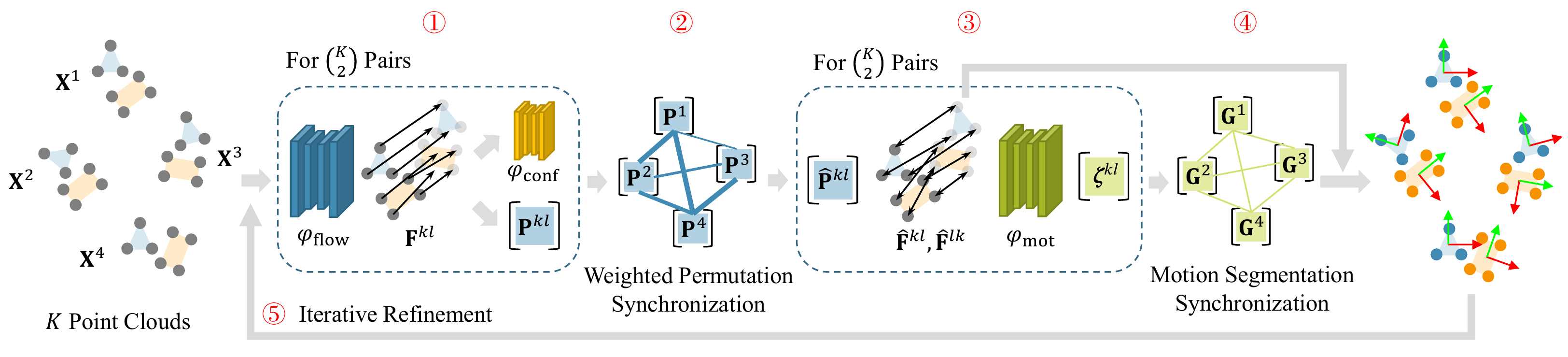}
    \caption{Our pipeline. \ding{172} Given multiple input point clouds, we first estimate pairwise scene flows. \ding{173} The point correspondences (permutations) computed from the flows are then synchronized in a weighted fashion to enforce consistencies. \ding{174} Pairwise relative segmentations are subsequently estimated from the flows, and \ding{175} further synchronized to get absolute motion segmentation. The pose of each part can be recovered by a weighted Kabsch algorithm. Our pipeline is fully differentiable and can be iterated (\ding{176}) to get improved results.\vspace{-4mm}}
    \label{fig:pipeline}
\end{figure*}
\parahead{Problem setting and notation}
Suppose we observe a set of $K$ point clouds $\Xset= \{ \X^k \in \R^{3 \times N}, k\in[1,K] \}$ where each point cloud $\X^k= \begin{bmatrix} \x^k_1, \mydots, \x^k_i, \mydots, \x^k_N \end{bmatrix}$ contains $N$ points in $\R^3$ and sampled from the same object with $S$ \emph{independently moving} rigid parts indexed by $s$.
Each point is assumed to belong to one of the $S$ rigid parts and we denote the binary \emph{point-part association matrices} as $\Gset = \{ \G^k \in [0,1]^{N \times S} \}$ where $G^{k}_{is}=1$ if $\x^k_i$ belongs to the $s^\mathrm{th}$ rigid part and $G^{k}_{is}=0$ otherwise\footnote{Throughout our paper we use superscript $k,l$ to index point-clouds, subscript $i,j$ to index points and subscript $s$ to index rigid parts.}. 
The rigid motions for each part $s$ in each point cloud $k$ is defined as $\Tset = \{ \T^k_s \in \SEuc, k\in[1,K], s\in[1,S] \}$, with the rotational part being $\Rot^k_s\in\SOg$ and the translational part being $\tra^k_s\in\R^3$.
Our final goal is to infer $\Gset$ and $\Tset$ given $\Xset$. 

\parahead{Summary}
The core of our approach is a fully differentiable deep network fusing rigid dynamic information from multiple 3D scans as outlined in ~\cref{fig:pipeline}.
We begin by explicitly predicting pairwise soft correspondences across {all pairs of point clouds} while enforcing consistency via a \emph{weighted permutation synchronization} (\cref{subsec:flow}).
Next, the point clouds are segmented using a novel motion-based segmentation network and also further synchronized by a subsequent \emph{motion segmentation synchronization} module (\cref{subsec:group}).
Finally, the correspondences and segmentations are used to recover the 6-DoF transformation for each of the individual rigid parts. The whole procedure can be iterated to refine the results.
The pipeline can be readily trained end-to-end and we describe our training procedure in \cref{subsec:train}.

\subsection{Flow Estimation and Synchronization}
\label{subsec:flow}
\vspace{-3pt}
Our approach starts with point correspondence estimation between all $\binom{K}{2}$ pairs of point clouds. 
We tackle this problem by predicting a 3D scene flow $\flow^{kl} = \begin{bmatrix} \fpt^{kl}_1, \mydots, \fpt^{kl}_i, \mydots, \fpt^{kl}_N \end{bmatrix} \in \R^{3 \times N}$ for each {point cloud pair indexed by} $(k,l)$ using a deep neural network $\flownet(\cdot)$, \ie $\flow^{kl}=\flownet(\X^k,\X^l)$, so that $\X^k + \flow^{kl} \stackrel{P}{=} \X^l$ holds up to a permutation.
The architecture of $\flownet$ inspired by \emph{Point PWC-Net}~\cite{wu2019pointpwc} is detailed in the supplementary material.

Flow signals, estimated in a pairwise fashion, are not informed about the multiview configuration at our disposal. 
To ensure multi-way consistent flows, we employ the weighted variant of permutation synchronization~\cite{maset2017practical} inspired by~\cite{gojcic2020learning,huang2019learning} where a closed-form solution is given under spectral relaxation. 
We begin by the observation that any flow $\flow^{kl}$ would induce a soft assignment matrix $\Pm^{kl} \in \Man^{N \times N}$ based on the nearest-neighbor distances:
\begin{equation}
    P^{kl}_{ij} = \frac{\exp{(\delta^{kl}_{ij}/\tau)}}{\sum_{\mathrm{j}=1}^N \exp{(\delta^{kl}_{i\mathrm{j}}/\tau)}},\,\,
    \delta^{kl}_{ij}=\lVert \x^k_i + \fpt^{kl}_i - \x^l_j \rVert^2
\end{equation}
where $\tau$ is the temperature of the \emph{softmax}.
The \emph{multinomial manifold} $\Man$ of row-stochastic matrices is a \emph{continuous relaxation} of the (partial) permutation group $\Pset$.

\parahead{Outlier filtering} 
To take into account the noise, missing points, or errors in the network, we further associate a confidence value $c^{kl}_i\in\R$ to each point $\x_i^k$ and its corresponding flow vector $\fpt_i^{kl}$ through another network $\confnet(\cdot):\R^{7\times N}\mapsto\R^N$ inspired from OANet~\cite{zhang2019learning}.
The input to this network are the tuples $\{(\x_i^k, \x_i^k + \fpt_i^{kl}, \argmin_j \delta_{ij}^{kl} ) \in \R^7\}_{i=1}^N$ and we provide the architectural details in the supplementary. 
The last dimension of this tuple measures the quality of the flow vector via the distance between the transformed points and their nearest neighbors, thereby detecting spurious flow predictions. 
The final $w^{kl}$ in~\cref{eq:wgcl} reflects the overall quality of the corresponding $\Pm^{kl}$. 
Here we choose $w^{kl}$ as the average confidence of all points, \ie, $w^{kl} = \sum_{i=1}^N c^{kl}_i / N$. 

\parahead{Consistent correspondences} 
We now use the predictions $\{\Pm^{kl}, w^{kl}\}_{(k,l)}$ to achieve multiview consistent assignments. 
To this end, we deploy a \emph{differentiable synchronization} algorithm inspired by~\cite{maset2017practical}.
We first introduce \emph{absolute} permutation matrices $\Pm^k$ which map each point in $\X^k$ to a \emph{universe} space and stack them as $\ps = [\dots,(\Pm^k)^\top,\dots]^\top$. We solve for the best $\ps$ minimizing:
\begin{equation}
\label{eq:sync}
E(\ps) = \sum_{k=1}^K \sum_{l=1}^K w^{kl} \lVert \Pm^k - \Pm^{kl} \Pm^l \rVert^2_F.
\end{equation}

\begin{thm}[Weighted synchronization]
The spectral solution to the \emph{weighted synchronization problem} in~\cref{eq:sync} $\ps$ is given by the $N$ eigenvectors of $\Lap$ corresponding to the smallest $N$ eigenvalues, where $\Lap \in \R^{KN \times KN}$ is the \textbf{weighted} Graph Connection Laplacian (GCL) constructed by tiling all $\Pm^{kl}$ matrices weighted by the related $w^{kl}$:
\begin{align}
\label{eq:wgcl}
\hspace{-3mm}\Lap = \begin{bmatrix}
w^1 \Id_N & -w^{12} \Pm^{12} & \dots & -w^{1K} \Pm^{1K} \\
-w^{21} \Pm^{21} & w^2 \Id_N & \dots & -w^{2K} \Pm^{2K} \\
\vdots & \vdots & \ddots & \vdots \\
-w^{K1} \Pm^{K1} & -w^{K2} \Pm^{K2} & \dots & w^K \Id_N,
\end{bmatrix},
\end{align}
with $w^k = \sum_{l=1,l\neq k}^K w^{kl}$ and $\Id_N\in\R^{N\times N}$ is the identity.
\end{thm}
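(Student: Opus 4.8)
\noindent\emph{Proof outline.}
The plan is to rewrite the quadratic objective in~\cref{eq:sync} as a single trace form in the stacked variable $\ps = [(\Pm^1)^\top,\dots,(\Pm^K)^\top]^\top$ and then invoke the Rayleigh--Ritz / Ky~Fan characterization of trace minimization over orthonormal frames. First I would expand each summand: since every absolute block $\Pm^k$ and every relative block $\Pm^{kl}$ is (partially) orthogonal, $\lVert \Pm^k - \Pm^{kl}\Pm^l \rVert_F^2 = \trace((\Pm^k)^\top \Pm^k) + \trace((\Pm^l)^\top (\Pm^{kl})^\top \Pm^{kl}\Pm^l) - 2\trace((\Pm^k)^\top \Pm^{kl}\Pm^l) = 2N - 2\trace((\Pm^k)^\top \Pm^{kl}\Pm^l)$, using $(\Pm^{kl})^\top\Pm^{kl} = \Id_N$, $(\Pm^l)^\top\Pm^l = \Id_N$, and transpose invariance of the trace to merge the two cross terms. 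Weighting and summing, the constant part becomes $2N\sum_{k\neq l} w^{kl} = 2N\sum_k w^k = 2\trace(\ps^\top \D \ps)$, where $\D$ is block-diagonal with blocks $w^k\Id_N$, while the bilinear part is $2\sum_{k\neq l} w^{kl}\trace((\Pm^k)^\top \Pm^{kl}\Pm^l) = 2\trace(\ps^\top \W \ps)$, where $\W$ is the off-diagonal block matrix with $(k,l)$-block $w^{kl}\Pm^{kl}$. Hence $E(\ps) = 2\trace(\ps^\top (\D-\W)\ps) = 2\trace(\ps^\top \Lap \ps)$ with $\Lap$ exactly the weighted GCL of~\cref{eq:wgcl}; here I use the consistency relations $\Pm^{lk} = (\Pm^{kl})^\top$ and $w^{kl} = w^{lk}$ to make $\Lap$ symmetric, and note that $w^{kk}$ and $\Pm^{kk}$ are immaterial because the $k=l$ summands vanish.

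Second, I would relax the combinatorial feasible set -- each $\Pm^k$ a (partial) permutation, which forces $\ps^\top\ps = K\Id_N$ -- to the constraint $\ps^\top\ps = c\Id_N$, i.e.\ $\ps$ having orthonormal columns up to a global scale; this is the standard spectral relaxation used in permutation synchronization, and it is gauge-consistent since the ``universe'' frame is in any case only fixed up to an orthogonal change of basis. Under this constraint, minimizing $\trace(\ps^\top \Lap \ps)$ is the classical trace-minimization problem, whose solution by the Courant--Fischer / Ky~Fan theorem is the $N$ eigenvectors of $\Lap$ associated with its $N$ smallest eigenvalues (the minimum value being $c$ times the sum of those eigenvalues). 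This is precisely the assertion. As a sanity check I would add that $\Lap$ is positive semidefinite whenever $w^{kl}\ge 0$, since for any block vector $v = [v^1;\dots;v^K]$ one computes $v^\top \Lap v = \sum_{k<l} w^{kl}\lVert v^k - \Pm^{kl} v^l\rVert^2 \ge 0$; in the ideal noise-free and fully consistent case $\Lap$ has an exactly $N$-dimensional kernel spanned by the ground-truth $\ps$, so the $N$ smallest eigenvalues vanish and the relaxation is tight. Rounding each recovered block back onto $\Pset$ by linear assignment is a separate step, orthogonal to the spectral statement proved here.

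The only genuine hurdle I anticipate is the bookkeeping in the first step: tracking transposes carefully so that the two cross terms of each expanded norm merge and the weighted bilinear sum assembles \emph{exactly} into the off-diagonal blocks $-w^{kl}\Pm^{kl}$ of~\cref{eq:wgcl}, which is where the symmetries $w^{kl}=w^{lk}$ and $\Pm^{lk}=(\Pm^{kl})^\top$ are essential. Once $E$ is written as $\trace(\ps^\top \Lap \ps)$, the spectral conclusion follows from standard linear algebra, so essentially all the content lies in the reduction.
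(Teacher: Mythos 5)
Your proposal is correct and follows essentially the same route as the paper's own proof: both expand the weighted quadratic energy (using $w^{kl}=w^{lk}$, $\Pm^{lk}=(\Pm^{kl})^\top$ and the permutation structure) to rewrite $E(\ps)=2\,\trace(\ps^\top\Lap\ps)$ with $\Lap$ the weighted GCL, and then invoke the Rayleigh--Ritz/Courant--Fischer trace-minimization principle under the relaxed orthonormality constraint to conclude that the minimizer is spanned by the $N$ eigenvectors of the smallest eigenvalues. Your trace-level bookkeeping versus the paper's column-wise expansion, and your added remarks on positive semidefiniteness and tightness in the noise-free case, are only cosmetic additions to the same argument.
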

\begin{proof}
Please refer to the supplementary material.
\end{proof}

This \emph{spectral solution} requires only an eigen-decomposition lending itself to easy differentiation~\cite{huang2019learning,gojcic2020learning}.
The synchronized soft correspondence $\hat{\Pm}^{kl}$ is then extracted as the $(k,l)$-th $N\times N$ block of $\ps \ps^\top$. 
As a consequence of the relaxation, we cannot ensure that each sub-matrix of $\ps \ps^\top$ would be a valid permutation. 
To preserve differentiability we avoid Hungarian-like projection operators~\cite{maset2017practical} and propose to directly compute the induced flow $\hat{\flow}^{kl}=\begin{bmatrix} \mydots, \hat{\fpt}^{kl}_i, \mydots \end{bmatrix}$ using a softmax normalization on the synchronized soft correspondences:
\begin{equation}
    \hat{\fpt}^{kl}_i = \frac{\sum_{j=1}^N \bar{P}^{kl}_{ij} (\x_{j}^l - \x_{i}^k)}{\sum_{j=1}^N \bar{P}^{kl}_{ij}}, \quad
    \bar{P}^{kl}_{ij}=\exp{(\hat{P}^{kl}_{ij}/t)}.
\end{equation}
Intuitively, this amounts to using the normalized synchronized result as a soft-assignment matrix, diminishing the effect of non-corresponding matches (false positives).

\subsection{Motion Segmentation}
\label{subsec:group}
\vspace{-3pt}
Based upon the multiview consistent flow output $\hat{\flow}^{kl}$, we now predict the point-part associations $\Gset$.
Since we are not provided with consistent labeling of the parts, instead of predicting $\G^k$ directly, we estimate for all $\binom{K}{2}$ point cloud pairs a relative \emph{motion segmentation matrix} $\Z^{kl} \in [0,1]^{N \times N}$, where $\zeta^{kl}_{ij}$ is 1 when $\x_i^k$ and $\x_j^l$ belong to the same rigid body, and $0$ otherwise.

\begin{figure}[t]
    \centering
    \includegraphics[width=\linewidth]{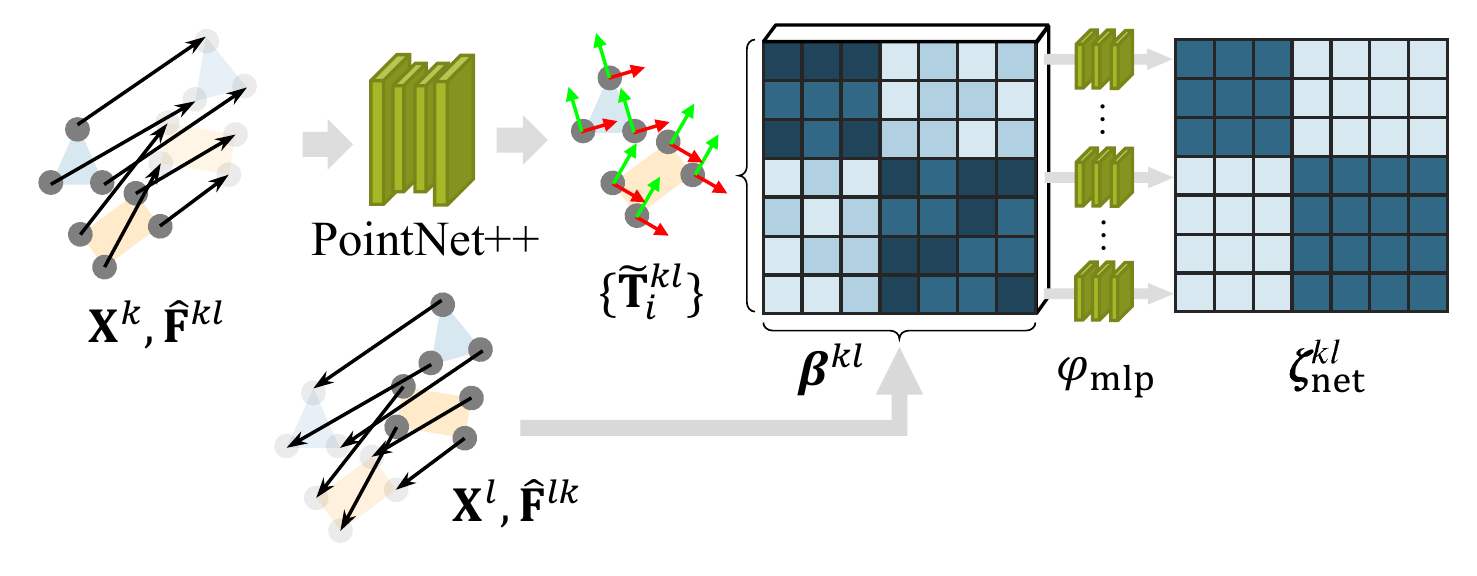}
    \caption{Illustration of our motion segmentation network $\segnet$. \vspace{-3mm}}
    \label{fig:motion-network}
\end{figure}

Our motion segmentation network $\segnet(\cdot):\R^{12\times N}\mapsto\R^{N\times N}$ illustrated in~\cref{fig:motion-network} takes the point cloud pair $\X^k$, $\X^l$ as well as flow $\hat{\flow}^{kl},\hat{\flow}^{lk}$ estimated from the last step  as input and outputs the matrix $\Z^{kl}$. 
It begins with a PointNet++~\cite{qi2017pointnet++} predicting a transformation $\Tp^{kl}_i$ for each point in $\x_i^k\in\X^k$\footnote{In practice, instead of predicting $\Tp_i^{kl}$ directly, we estimate a residual motion \wrt the already obtained flow vectors similar to the method in \cite{yi2018deep}. This procedure is detailed in our supplementary material.}.
The predictions map the part in $\X^k$ containing $\x_i^k$ to $\X^l$.
We then compute a residual matrix $\bm{\beta}^{kl} \in \R^{3 \times N\times N}$ based on $\Tp_i^{kl}$, whose element is:
\begin{equation}
    \bm{\beta}_{ij}^{kl} = (\Tp_i^{kl})^{-1} \circ \x_j^l - (\x_j^l + \hat{\fpt}_j^{lk}),
\end{equation}
where $\circ$ denotes the action of $\Tp$. 
One can easily verify that the smaller the norm of the $(i,j)$-th entry of $\bm{\beta}^{kl}$ is, the more likely that $\x_i^k$ and $\x_j^l$ are in the same rigid part. 
Therefore, it contains valuable information for deducing the motion segmentation $\Z^{kl}$. 
We apply $N$ denoising mini-PointNet~\cite{qi2017pointnet} $\denoisenet(\cdot)$ to each horizontal $3\times N$ slice of $\bm{\beta}^{kl}$, concatenated with $\X^l$ to get a likelihood score for each pair of points $(\x_i^k\in\X^k,\x_j^l\in\X^l)$. The network output $\Znet^{kl}$ is subsequently computed by applying a sigmoid on the output:
\begin{align}
\label{equ:denoise}
    (\Znet^{kl})_{i,:} = \mathrm{sigmoid}\left(\denoisenet([\bm{\beta}^{kl}_{i,:}, \X^l])\right).
\end{align}

\parahead{Motion segmentation consistency}
Given all pairwise motion information {$\Z^{kl}$}, we adopt the method of Arrigoni and Pajdla~\cite{arrigoni2019motion} to compute an absolute motion segmentation $\Gs \in \R^{KN \times S}$ as a stack of matrices in $\mathcal{G}$.
Once again, this is an instance of a synchronization problem, with the stacked relative and absolute motion segmentation matrices being: 
\begin{equation}
\Zm = \begin{bmatrix}
\zero & \Z^{12} & \dots & \Z^{1K} \\
\Z^{21} & \zero & \dots & \Z^{2K} \\
\vdots & \vdots & \ddots & \vdots \\
\Z^{K1} & \Z^{K2} & \dots & \zero \\
\end{bmatrix}, \quad
\Gs = \begin{bmatrix}
\G^1 \\
\G^2 \\
\vdots \\
\G^K \\
\end{bmatrix},
\end{equation}
A spectral approach similar to the one in~\cref{subsec:flow} optimizes for $\Gs$ so that $\Zm = \Gs \Gs^\top$ is best satisfied.
Then, $\Gs$ is just the $S$ leading eigenvectors of $\Zm$, scaled by the square root of its $S$ largest eigenvalues. 
Here, the point-part association matrices $\G^k$ are relaxed to \emph{fuzzy} segmentations by allowing its entries to take real values.
As a subsequent step similar to~\cref{subsec:flow}, we replace the projection step with a row-wise softmax on $\Gs$ to maintain differentiability.

Note that the output $\Znet^{kl}$ of $\varphi_{mot}$ is \emph{unnormalized}, meaning that any submatrix in $\Zm$ can be written as $\Z^{kl}=\sigma^{kl}\Znet^{kl}$, where $\sigma$ acts as a normalizer. This is akin to encoding a \emph{confidence} in the norm of the matrix $\Znet^{kl}$ and requires us to solve a weighted synchronization. However, as we prove in the following theorem, such a solution would involve an anisotropic scaling in the eigenvectors as a function of the number of points belonging to each part. As this piece of information is not available in runtime, we take an alternative approach and approximate the scaling factor as $q^{kl}=\mathrm{mean}(\Znet^{kl})$ and \emph{pre-factor} it out of $\Znet^{kl}$, by letting $\Z^{kl} = \Znet^{kl} / q^{kl}$. In this way, we ensure that the eigenvectors yield the synchronized motion segmentation.

\begin{thm}
Under mild assumptions, the solution to the segmentation synchronization problem using a non-uniformly weighted matrix will result in a proportionally scaled version of the solution obtained by the eigenvectors of the unweighted matrix $\Zm$.
\end{thm}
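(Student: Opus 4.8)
The plan is to mirror the derivation behind the weighted permutation synchronization result above, but for the quartic factorization model $\Zm\approx\Gs\Gs^\top$ of Arrigoni and Pajdla, and then to evaluate the resulting spectral solution \emph{in closed form} by exploiting the block structure of $\Gs$. \emph{Step 1: reduction to a weighted Eckart--Young problem.} I would write the non-uniformly weighted synchronization energy as $E(\Gs)=\sum_{k,l} w^{kl}\lVert\Z^{kl}-\G^k(\G^l)^\top\rVert_F^2=\lVert\Sm\odot(\Zm-\Gs\Gs^\top)\rVert_F^2$, where the $w^{kl}$ are the per-block weights induced by the confidence and $\Sm=\W^{1/2}\otimes(\one\one^\top)$ is block-constant. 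The three ``mild assumptions'' are exactly what make this solvable: (i) the observed blocks are \emph{consistent and noiseless up to the per-block scalar}, $\Z^{kl}=\sigma^{kl}\,\G^k(\G^l)^\top$ with the $\G^k$ sharing the same column (part) supports across $k$; (ii) the weight matrix is \emph{separable}, $w^{kl}=a^k a^l$ (the canonical choice absorbing $\sigma^{kl}$); and (iii) the nonzero eigenvalues remain simple, so the truncated eigendecomposition is unique up to sign. Under (ii) the Hadamard weighting becomes a two-sided congruence: with $\mathbf{A}=\mathrm{diag}(a^1,\dots,a^K)\otimes\Id$, one has $E(\Gs)=\lVert\mathbf{A}\Zm\mathbf{A}-(\mathbf{A}\Gs)(\mathbf{A}\Gs)^\top\rVert_F^2$, so by Eckart--Young the minimiser satisfies $\mathbf{A}\Gs^{\star}=\U_S\bm{\Lambda}_S^{1/2}$, the rank-$S$ truncated eigendecomposition of the transformed matrix $\mathbf{A}\Zm\mathbf{A}$.

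\emph{Step 2: closed-form spectral solution.} Since every point lies in exactly one part, the columns of $\Gs$ have pairwise-disjoint support, hence $\Gs^\top\Gs=K\,\mathrm{diag}(n_1,\dots,n_S)$ is diagonal, where $n_s$ is the (assumed scan-independent) size of part $s$. Substituting the noiseless model and collapsing the block scalars to an effective per-part diagonal $\D=\mathrm{diag}(d_1,\dots,d_S)$, a short Kronecker computation gives the nonzero eigenvalues of $\mathbf{A}\Zm\mathbf{A}$ as those of $\D^{1/2}(\mathbf{A}\Gs)^\top(\mathbf{A}\Gs)\D^{1/2}$, i.e.\ proportional to $d_s\,n_s$, while the corresponding eigenvectors are the \emph{normalised} columns of $\mathbf{A}\Gs$ (the normalisation absorbs the $a^k$ and $d_s^{1/2}$ prefactors, and it is here that the cardinalities $n_s$ enter, through $\Gs^\top\Gs$). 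Re-assembling $\U_S\bm{\Lambda}_S^{1/2}$ and undoing $\mathbf{A}$ yields $\Gs^{\star}=\Gs\,\mathrm{diag}(\rho_1,\dots,\rho_S)$, where each $\rho_s>0$ is a function of the aggregated confidence $d_s$ of part $s$ and of its cardinality $n_s$.

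\emph{Step 3: comparison with the unweighted solution and consequence.} Rerunning Step~2 with $w^{kl}\equiv 1$ (equivalently $\D=\Id$) returns $\Gs$ up to the sign/rotation ambiguity inside degenerate eigenspaces, which assumption~(iii) removes. Hence $\Gs^{\star}=\Gs^{\star}_{\mathrm{unweighted}}\,\mathrm{diag}(\rho_1,\dots,\rho_S)$: the non-uniformly weighted solution is the unweighted one post-multiplied by a per-part positive diagonal, i.e.\ a proportional, column-wise (equivalently part-wise) rescaling, as claimed. Because $\rho_s$ depends on the cardinalities $n_s$, which are not observable at inference time, this rescaling cannot be undone \emph{a priori}; this is precisely the obstruction that makes us replace the true $\sigma^{kl}$ by the computable surrogate $q^{kl}=\mathrm{mean}(\Znet^{kl})$ and pre-factor it out, after which the plain eigenvectors of $\Zm$ already deliver the synchronised segmentation. (The final row-wise softmax only rescales each row and so preserves the $\argmax$ part label, making any residual anisotropy harmless for the hard assignment.)

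\emph{Main difficulty.} The crux is Step~1: weighted low-rank approximation is NP-hard for a generic weight pattern, so the statement genuinely hinges on the block-constant-plus-separable structure of $\W$ together with the noiseless/consistent block model. Pinning down the \emph{minimal} such hypotheses under which the Hadamard weighting collapses to a congruence, and then verifying that the induced distortion of $\Gs$ is \emph{exactly} $\mathrm{diag}(\rho_s)$ and nothing more --- in particular tracking how the $n_s$ factors propagate through the eigenvalue rescaling --- is where the real work lies; once that is in place, Step~2 is routine bookkeeping with Kronecker products and disjoint supports, in direct analogy with the proof of the weighted GCL result above.
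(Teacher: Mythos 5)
Your proposal does not establish what the paper's theorem actually claims, and the route you take changes the problem in two substantive ways. First, you recast the statement as a Hadamard-weighted low-rank approximation and make its tractability hinge on a \emph{separable} weight assumption $w^{kl}=a^ka^l$. But the per-block scalars $\sigma^{kl}$ in $\ZW$ come from the norms of the network outputs $\Znet^{kl}$ and are indexed by scan \emph{pairs}; nothing makes them separable, and the paper never needs this. Moreover, the NP-hardness of generic weighted low-rank approximation is a red herring here: the spectral relaxation $\max_{\UW}\mathrm{tr}(\UW^\top\ZW\UW)$ s.t.\ $\UW^\top\UW=\Id_S$ is an ordinary eigenproblem for \emph{any} weight pattern, so the content of the theorem is not solvability but a characterization of what those eigenvectors look like relative to the clean $\Gs$. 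The paper obtains this for arbitrary weights by plugging the ansatz $\Gw=[\G^1\D^1;\dots;\G^K\D^K]$ (per-scan diagonal scalings) into the eigenvector condition $\ZW\Gw^s=\evalw_s\Gw^s$, using $\Z^{kl}=\G^k\G^{l\top}$ and the diagonality of $\G^{l\top}\G^l$ to cancel $\G^k$, and reducing the condition to a per-part $K\times K$ eigenproblem $\J^s\dvec^s=\evalw_s'\dvec^s$ with entries $E^{kl}_{ss}=w^{kl}(\G^{l\top}\G^l)_{ss}$; the existence of real solutions of these small eigenproblems is exactly what delivers the anisotropic scaling, and their dependence on the (unknown) per-part point counts is what motivates the normalization $q^{kl}$ in the main text. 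None of this machinery appears in your argument.

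Second, even within your restricted separable setting, the conclusion you state does not follow from your own computation. With $w^{kl}=a^ka^l$ one has $\ZW=(\mathbf{A}\Gs)(\mathbf{A}\Gs)^\top$ with $\mathbf{A}=\mathrm{diag}(a^1\Id_N,\dots,a^K\Id_N)$, and since the columns of $\mathbf{A}\Gs$ have disjoint supports, the top-$S$ eigenvectors scaled by the root eigenvalues recover $\mathbf{A}\Gs$ up to sign, i.e.\ the special case $\D^k=a^k\Id_S$ of the paper's result: a \emph{per-scan row-block} scaling, not the column-wise rescaling $\Gs\,\mathrm{diag}(\rho_1,\dots,\rho_S)$ you assert. ``Undoing $\mathbf{A}$'' is not part of the spectral solution (the theorem is about the eigenvectors of $\ZW$ itself), and if you did undo it you would get $\Gs$ exactly, contradicting your own claimed per-part scaling. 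The ``effective per-part diagonal $\D$'' into which you collapse the block scalars is never constructed and in general cannot be, precisely because those scalars are per-pair quantities; the per-part structure in the paper enters only through the auxiliary eigenproblems $\J^s$ and the part cardinalities. Your closing remarks about the test-time normalization and the row-wise softmax are consistent with the paper, but the core of the theorem --- the per-scan, per-part anisotropic scaling characterized via those eigenproblems --- is missing, and the specific scaled form you derive is incorrect in general.
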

\begin{proof}
Please refer to the supplementary material.
\end{proof}

As we show in our supplement, entry $k$ in the decomposed eigenvalues is related to the number of points belonging to motion $k$. 
To compute the number of rigid bodies $S$, \ie, determine how many eigenvectors to use in $\Gs$, the spectrum of $\Zm$ is analyzed during test time: We estimate $S$ as the number of eigenvalues that are larger than $\alpha$-percent of the sum of the first $10$ eigenvalues.
For training, we just fix $S=6$ as an over-parametrization. 

\parahead{Pose Computation and Iterative Refinement}
We finally estimate the motion for each part using a weighted Kabsch algorithm~\cite{kabsch1976solution,gojcic2020learning} followed by a joint pose estimation. During test time we also iterate our pipeline several times to gradually refine the correspondence and segmentation estimation by transforming input point clouds according to the estimated $\mathcal{T}$ and adding back the residual flow onto the flow predicted at the previous iteration. The details are provided in our supplementary.

\begin{figure*}[t]
    \centering
    \includegraphics[width=\linewidth]{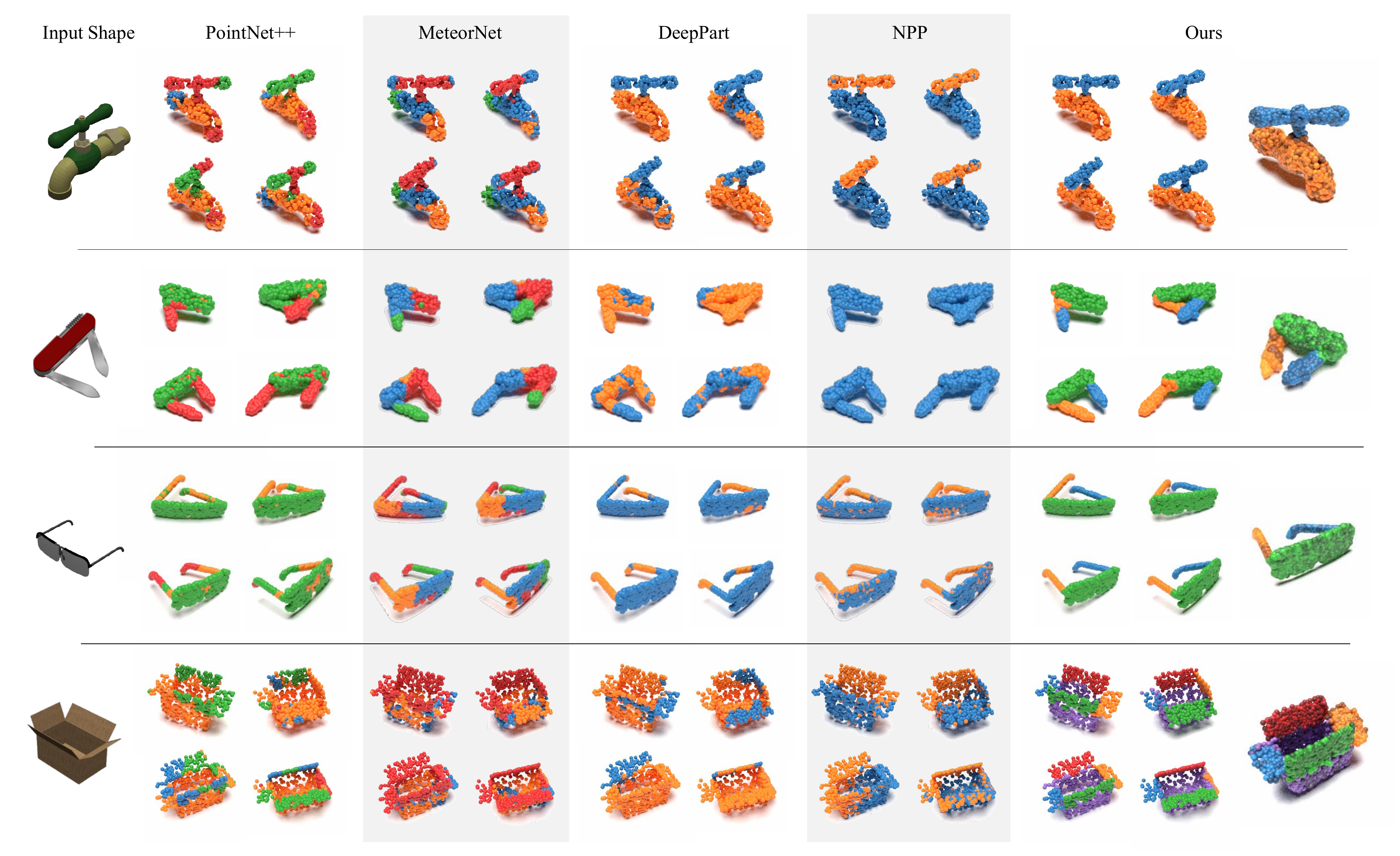}
    \caption{Qualitative results on SAPIEN~\cite{Xiang_2020_SAPIEN} dataset. On the left-most column, we show reference rendering of the objects we tackle. Each method span two columns and the point colors show the motion segmentation. For our method, we additionally show the registration result as the last column, where the darkness of the points shows the point cloud index it comes from.\vspace{-3mm}}
    \label{fig:sapien}
\end{figure*}
\subsection{Network Training}
\label{subsec:train}
\vspace{-3pt}
We propose to train each learnable component of our pipeline separately in a pairwise manner and then fine-tune their parameters using the full pipeline.
Specifically, we first train the flow estimation network $\flownet$ supervised with ground-truth flow: $
    \mathcal{L}_\mathrm{flow}^{kl} = \left \lVert \flow^{kl} - \flow^{kl,\gt} \right \rVert_F^2$. 
Given the trained $\flownet$, the confidence estimation network $\confnet$ is trained based on its output using a binary cross-entropy (BCE) loss supervised by comparing whether the error of the predicted flow is under a certain threshold:
\begin{align}
    \mathcal{L}_\mathrm{conf}^{kl} = \sum_{i=1}^N \mathrm{BCE}(c_i^{kl,\gt}, c_i^{kl}),
\end{align}
with $c_i^{kl,\gt}=1$ if we have ${\lVert \fpt_i^{kl} - \fpt_i^{kl,\gt} \rVert}_2^2 < \epsilon_{\fpt}$ and 0 otherwise.
The motion segmentation network $\segnet$ is trained using joint supervision over the estimated transformation residual and the final motion segmentation matrix: $\mathcal{L}_\mathrm{seg}^{kl} = \mathcal{L}_\mathrm{trans}^{kl} + \mathcal{L}_\mathrm{group}^{kl}$ where each term is defined as:
\begin{align}
    \mathcal{L}_\mathrm{trans}^{kl} &= \sum_{i=1}^N \frac{ \sum_{j=1}^N \mathbb{I}(\zeta_{ij}^{kl}=1) \left\lVert \bm{\beta}^{kl}_{ij} \right\rVert_2^2 }{ \sum_{j=1}^N \mathbb{I}(\zeta_{ij}^{kl}=1) },\\
    \mathcal{L}_\mathrm{group}^{kl} &= \sum_{i=1}^N \sum_{j=1}^N \mathrm{BCE}(\zeta^{kl,\gt}_{ij},\zeta^{kl}_{ij}).
\end{align}

After we train all the networks (\ie, $\flownet$, $\confnet$ and $\segnet$), the entire pipeline is trained end-to-end with the supervision on both the pariwise flow $\sum_{k=1}^K \sum_{l=1}^K \mathcal{L}_\mathrm{flow}^{kl} $ and the IoU (Intersection-over-union) loss, defined as:
\begin{equation}
\label{equ:loss:iou}
    \mathcal{L}_\mathrm{iou} = \argmin_{\mathbf{A}} \sum_{s,s'=1}^{S\times S} \frac{\mathbf{A}(s,s') \cdot (\Gs^{\gt}_{:,s})^\top \Gs_{:,s'}}{ \lVert \Gs^{\gt}_{:,s} \rVert_2^2 + \lVert \Gs_{:,s'} \rVert_2^2 - (\Gs^{\gt}_{:,s})^\top \Gs_{:,s'}},\nonumber
\end{equation}
where $\mathbf{A}$ is an $S \times S$ binary assignment matrix which we found using the Hungarian algorithm.
The flow supervision is added to both the output of flow network, and the final pairwise \emph{rigid flow} computed as $\fpt_i^{kl}=\T^l_s (\T^k_s)^{-1} \circ \x_i - \x_i$.

\section{Experiments}
\vspace{-3pt}
\begin{table}
{
\small
\centering
\caption{Rigid flow estimation on SAPIEN. Table reports mean and std. dev. (+/-) of the EPE3D over all pairwise flows, with (S) or w/o (NS) \textbf{S}ynchronization and with (W) or w/o (NW) \textbf{W}eighting.}
\label{tbl:sapien-flow}
\begin{tabularx}{\linewidth}{c|ccCCC} 
\toprule
\multirow{2}{*}{} & \multirow{2}{*}{\footnotesize{\makecell{Deep\\Part~\cite{yi2018deep}}}} & \multirow{2}{*}{\footnotesize{NPP~\cite{hayden2020nonparametric}}} & \multicolumn{3}{c}{Ours}         \\ 
\cline{4-6}
                  &                           &                      & \footnotesize{NS, NW} & \footnotesize{S, NW} & \footnotesize{S, W}            \\ 
\midrule
Mean              & 5.95                      & 21.22                & 6.20   & 6.08  & \textbf{5.03}   \\
+/-               & 3.57                      & 6.29                 & 4.06   & 3.47  & \textbf{2.00}   \\
\bottomrule
\end{tabularx}
}\vspace{-1.6em}
\end{table}
\parahead{Datasets}
Our algorithm is tested on two main datasets: SAPIEN~\cite{Xiang_2020_SAPIEN} and \emph{\dataset} dataset contributed by this work: \textbf{SAPIEN} consists of realistic simulated articulated models with part mobility annotated. 
We ensure that the categories used for training and validation do not overlap with the test set, finally leading to 720 articulated objects with 20 different categories.
We then perform $K$ virtual 3D scans of the models, with each scan capturing the same object with a different camera (and hence object) pose and object articulating state.
Later, furthest point sampling is applied to down-sample the number of points to $N$.
\textbf{\dataset} (Dynamic Laboratory) contains 8 different scenes in a laboratory, each with 2-3 rigidly moving \emph{solid} objects from various categories.
Each of the scenes is captured 8 times, reconstructed using ElasticFusion~\cite{whelan2015elasticfusion} and between each capture, the object positions are randomly changed.
The dataset also contains manual annotations of the object segmentation mask and rigid absolute transformations.
For benchmarking, in each scene we choose different combinations of the 8 captures, leading to a total of $8\cdot \binom{8}{4}=560$ dataset items.
We believe the two different scenarios (articulated single object and moving rigid bodies) reflected in the test sets are sufficient to verify the robustness and the general applicability of our algorithm.

The training data for articulated objects are generated using the dataset from~\cite{yi2016scalable}, containing manually annotated semantic segmentation of 16 categories. 
Similar to~\cite{yi2018deep}, we generate $K$ random motions for each connected semantic part of the shapes.
For the training data of solid objects, we randomly sample independent motions for multiple objects taken from ShapeNet~\cite{chang2015shapenet} as if they are floating and rotating in the air. 
Please refer to supplementary material for detailed data specifications and visualizations.

\parahead{Metrics} Two main metrics are used:
(1) \textbf{EPE3D} (End-Point Error in 3D) of all $\binom{K}{2}$ pairs of point clouds. The mean and standard deviation (+/-) measures the rigid 3D flow estimation quality:
While the mean reflects an overall error in the transformation, the standard deviation shows how consistent the estimate is among all pairs - a desirable property in the multi-scan setting.
(2) Segmentation accuracy assesses the motion segmentation quality.
We use \textbf{mIoU} (mean Intersection-over-Union) and \textbf{RI} (Rand Index) to score the output based on `Multi-Scan' and `Per-Scan' segmentations.
For `\textbf{Multi-Scan}', we evaluate the points from all $K$ clouds altogether, revealing the consistency of the labeling across multiple scans.
For `\textbf{Per-Scan}', we compute the score for each of the clouds separately and evaluate the mean and standard deviation across all scans.

\parahead{Training} $\flownet$, $\segnet$ and $\confnet$ are trained using Adam optimizer with initial learning rate of $10^{-3}$ and a 0.5/0.7/0.7 decay every 400K iterations for the three networks. The batch sizes are set to 32/8/32, respectively.
The entire pipeline is trained end-to-end using $K=4$ point clouds, with a learning rate of $10^{-6}$.
The gradient computation for eigen-decomposition will sometimes lead to numerical instabilities~\cite{dang2018eigendecomposition}, so we roll back that iteration when the gradient norm is large.
Our algorithm is implemented using PyTorch~\cite{pytorch} with $N=512$, $\tau=0.01$, $\epsilon_{\bm{f}}=0.1$. 
We set $\alpha=0.05$ for articulated objects and $\alpha=0.15$ for solid objects.

\begin{table}
{
\small
\centering
\caption{Segmentation Accuracy on SAPIEN Dataset.}
\label{tbl:sapien-segm}
\begin{tabularx}{\linewidth}{l|CCcc} 
\toprule
\multirow{2}{*}{} & \multicolumn{2}{c}{Multi-Scan}   & \multicolumn{2}{c}{Per-Scan}                                      \\ 
\cline{2-5}
                  & mIoU           & RI              & \multicolumn{1}{c}{mIoU}       & \multicolumn{1}{c}{RI}           \\ 
\midrule
PointNet++~\cite{qi2017pointnet++}
& 47.5           & 0.62            & 51.2$\pm$12.1                  & 0.65$\pm$0.09                    \\
MeteorNet~\cite{liu2019meteornet}
& 43.7           & 0.59            & 45.7$\pm$5.4                   & 0.60$\pm$0.03                    \\
DeepPart~\cite{yi2018deep}
& 49.2           & 0.64            & 53.0$\pm$8.9                   & 0.67$\pm$0.06                    \\
NPP~\cite{hayden2020nonparametric}
& 48.2           & 0.63            & 51.5$\pm$6.6                   & 0.66$\pm$0.05                    \\ 
\midrule
\textbf{Ours} (4 iters)     & \textbf{66.7}  & \textbf{0.76 }  & \textbf{67.3}$\pm$\textbf{4.3} & \textbf{0.77}$\pm$\textbf{0.03}  \\
\bottomrule
\end{tabularx}
}\vspace{-2em}
\end{table}
\subsection{Results on Articulated Objects}
\label{subsec:exp:sapien}
\vspace{-3pt}
\parahead{Baselines}
Given our new multi-scan multi-body setting, we made adaptations to previous methods and compared to the following 4 baselines:
(1) \textbf{PointNet++}~\cite{qi2017pointnet++}: We use the segmentation backbone to directly predict $\G^k$ matrices. We aggregate the bottleneck features by taking the max before feeding it to the individual $K$ feature propagation modules.
(2) \textbf{MeteorNet}~\cite{liu2019meteornet}: We use the \emph{MeteorNet-seg} model proposed to directly predict the segmentations. Both PointNet++ and MeteorNet are supervised with the IoU loss (\cref{equ:loss:iou}) which counts in the ambiguity of rigid body labeling.
(3) \textbf{DeepPart}~\cite{yi2018deep}: As this method only allows pairwise input, we associate multiple point clouds using sequential label propagation.
(4) \textbf{NPP} (Non-Parametric Part)~\cite{hayden2020nonparametric}: This algorithm does not need training and a grid search is conveyed for its many tunable parameters.

\begin{figure}[t]
    \centering
    \includegraphics[width=\linewidth]{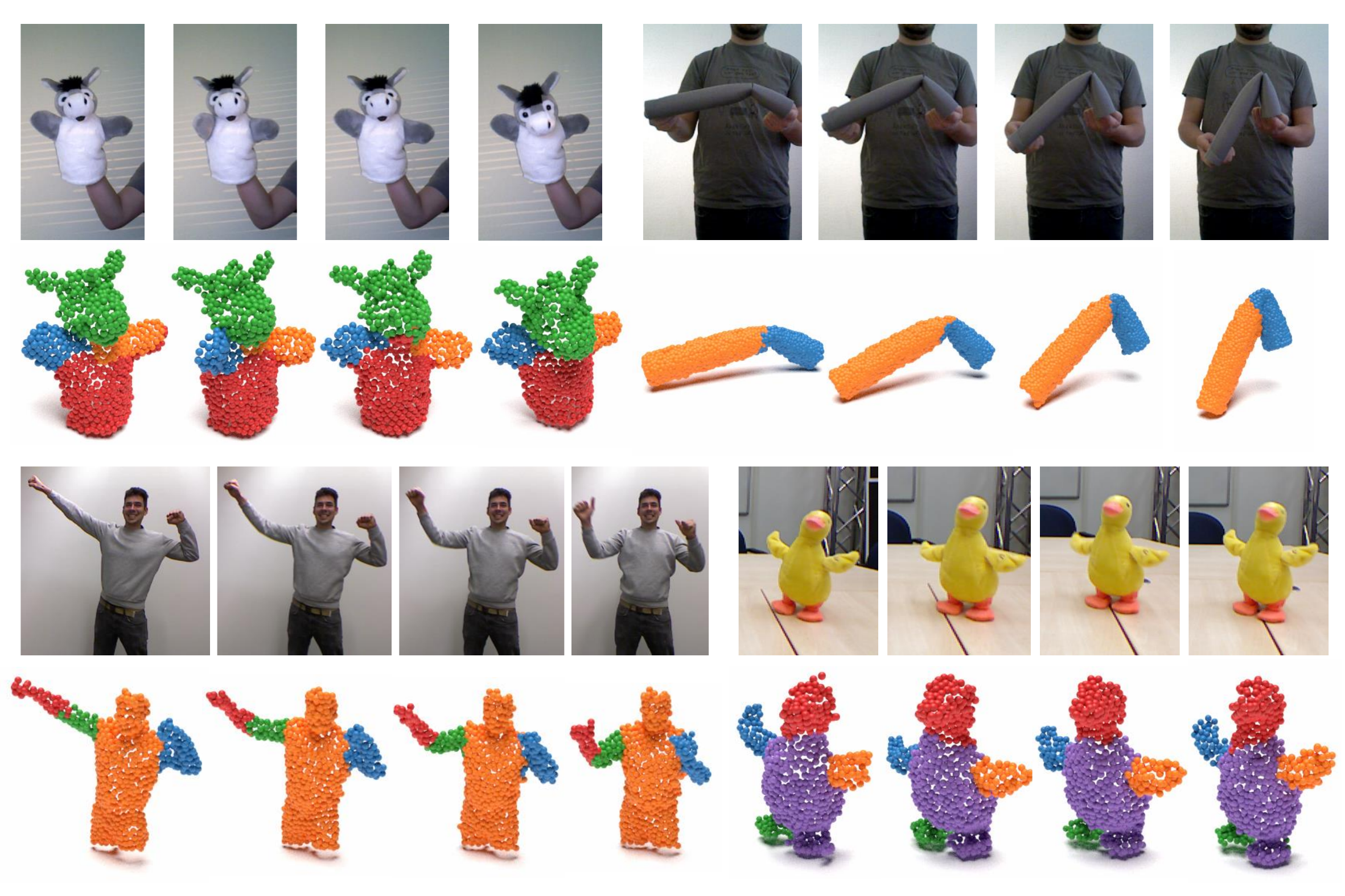}
    \caption{Qualitative segmentation results on two articulated sequences from \cite{Tzionas:ECCVw:2016} (`Donkey' and `Pipe 3/4', first two rows) and \cite{slavcheva2017cvpr} (`Alex' and `Duck', last two rows).}
    \label{fig:avped}
    \vspace{-\baselineskip}
\end{figure}

\parahead{Flow Accuracy}
\cref{tbl:sapien-flow} shows that despite being based on~\cite{yi2018deep}, our method gives the lowest flow error and variance across different view pairs. This is thanks to the correspondence consistency among the provided $K$ scans enforced by our synchronization module.
The NPP method suffers from a surprisingly high flow error mainly because the point-level correspondence is not explicitly modeled. 
Note that PointNet++ and MeteorNet are excluded because they only output point-wise segmentations.

\parahead{Segmentation Accuracy}
For the segmentation benchmark, we achieve a significantly better result than all the baselines as shown in \cref{tbl:sapien-segm}.
Among the baselines, MeteorNet fails because it assumes proximity of relevant data in the given point clouds, which is not robust to SAPIEN dataset because of change in both object pose and articulated parts.
Even though PointNet++ reaches a relatively high mean score, the standard deviation and Multi-Scan score show the segmentation is not consistent across different input scans.
DeepPart is specially designed for part-based motion segmentation, but only operates on two-views, which can cause drastic performance degradation if the input two-views have a large difference.
Also the RNN they proposed for part segmentation tends to generate short sequences and most of the shapes are only divided into two parts.
Despite the large error in flow estimation, NPP behaves reasonably in terms of segmentation.
Qualitative comparisons are visualized in \cref{fig:sapien}.

One important aspect of our network is that it can generalize to different objects and motions \emph{without re-training}. To qualitatively showcase this, we use two additional dynamic RGB-D sequences from \cite{Tzionas:ECCVw:2016} and \cite{slavcheva2017cvpr}.
For each sequence, we use four views and back-project the depth map into point clouds for inference. 
As shown in \cref{fig:avped}, our model trained on full objects of synthetic SAPIEN dataset, can generalize to real dynamic depth sequences producing consistent motion-based segmentation. 
This is possible thanks to the property that our network anchors on the motion and not on the specific geometry. 

\subsection{Results on Full Objects}
\label{subsec:exp:labmot}
\vspace{-3pt}
In \dataset, each rigid body (\ie object) is now semantically meaningful, so apart from the 4 baseline methods from \cref{subsec:exp:sapien}, we additionally compare to the following two alternatives:
(5) \textbf{InstSeg} (Instance Segmentation): We take the state-of-the-art indoor semantic instance module PointGroup~\cite{jiang2020pointgroup} trained on ScanNet dataset to segment for each input cloud.
(6) \textbf{Geometric}: We use the Ward-linkage~\cite{ward_hac} to {agglomeratively cluster} the points in each scan. 
In order to obtain consistent segmentation across multiple inputs, we associate the segmentations between two different scans using a Hungarian search over the object assignment matrix, whose element is the \emph{root mean squared error} measuring the fitting quality between any combinations of the object associations.

\begin{table}
{
\small
\centering
\caption{Segmentation Accuracy on \dataset dataset.}
\label{tbl:ours-segm}
\begin{tabularx}{\linewidth}{l|CCcc} 
\toprule
\multirow{2}{*}{} & \multicolumn{2}{c}{Multi-Scan}  & \multicolumn{2}{c}{Per-Scan}                                      \\ 
\cline{2-5}
                  & mIoU           & RI             & mIoU                           & RI                               \\ 
\midrule
PointNet++~\cite{qi2017pointnet++}
& 37.2           & 0.53           & 39.4$\pm$7.1                   & 0.54$\pm$0.03                    \\
MeteorNet~\cite{liu2019meteornet}
& 58.5           & 0.69           & 71.8$\pm$9.7                   & 0.76$\pm$0.06                    \\
DeepPart~\cite{yi2018deep}
& 60.7           & 0.70           & 66.3$\pm$17.2                  & 0.75$\pm$0.13                    \\
NPP~\cite{hayden2020nonparametric}
& 65.7           & 0.74           & 71.6$\pm$7.7                   & 0.78$\pm$0.05                    \\
Geometric         & 83.1           & 0.87           & 88.6$\pm$5.8                   & 0.91$\pm$0.04                    \\
InstSeg~\cite{jiang2020pointgroup}
& 56.5           & 0.66           & 72.4$\pm$12.5                  & 0.78$\pm$0.09                    \\ 
\midrule
\textbf{Ours}
& \textbf{90.7 } & \textbf{0.95 } & \textbf{94.0}$\pm$\textbf{3.1} & \textbf{0.96}$\pm$\textbf{0.02}  \\
\bottomrule
\end{tabularx}
}\vspace{-3mm}
\end{table}

\begin{figure}[t]
    \centering
    \includegraphics[width=\linewidth]{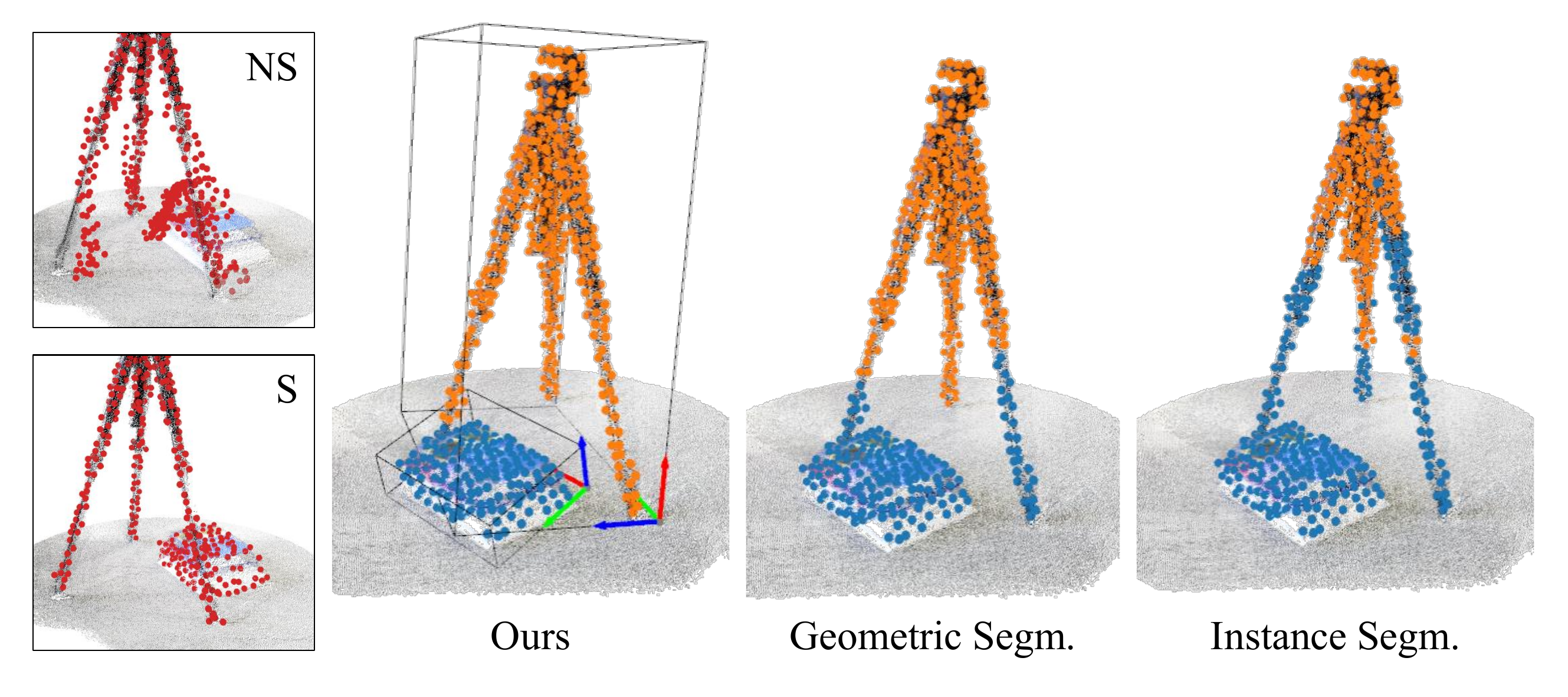}
    \caption{Example comparisons to baselines on \dataset. In the leftmost column we compare the warped point clouds without (`NS') and with (`S') synchronization. The right three sub-figures show the segmentation in different colors. For clarity we exclude the computed pose for the \emph{geometric} \& \emph{InstSeg} approaches because the inaccuracy in segmentation leads to very noisy poses.\vspace{-3mm}}
    \label{fig:full-compare}
\end{figure}

Interestingly, as listed in \cref{tbl:ours-segm}, all the previous deep methods lead to unsatisfactory results on this dataset.
PointNet++ and MeteorNet are found to be inaccurate because by design they associate labels in the level of semantics (not motion) and no explicit consistencies across scans are considered.
Even though the InstSeg method is trained on large-scale scene dataset, it is impossible for it to cover all real-world categories so wrong detections are observed in some scenes. 
The geometric approach is less robust in cluttered scenes where no obvious geometric cues can be used.
Our method is motion-induced and is hence robust to geometric variations and out-of-distribution semantics, outperforming all baselines.
A typical failure scenario for these approaches is visualized in \cref{fig:full-compare}.
We show additional qualitative results in \cref{fig:full}, demonstrating our ability to accurately segment, associate, and compute correct object transformations even if there are large pose changes.

\begin{figure}[t]
    \centering
    \includegraphics[width=\linewidth]{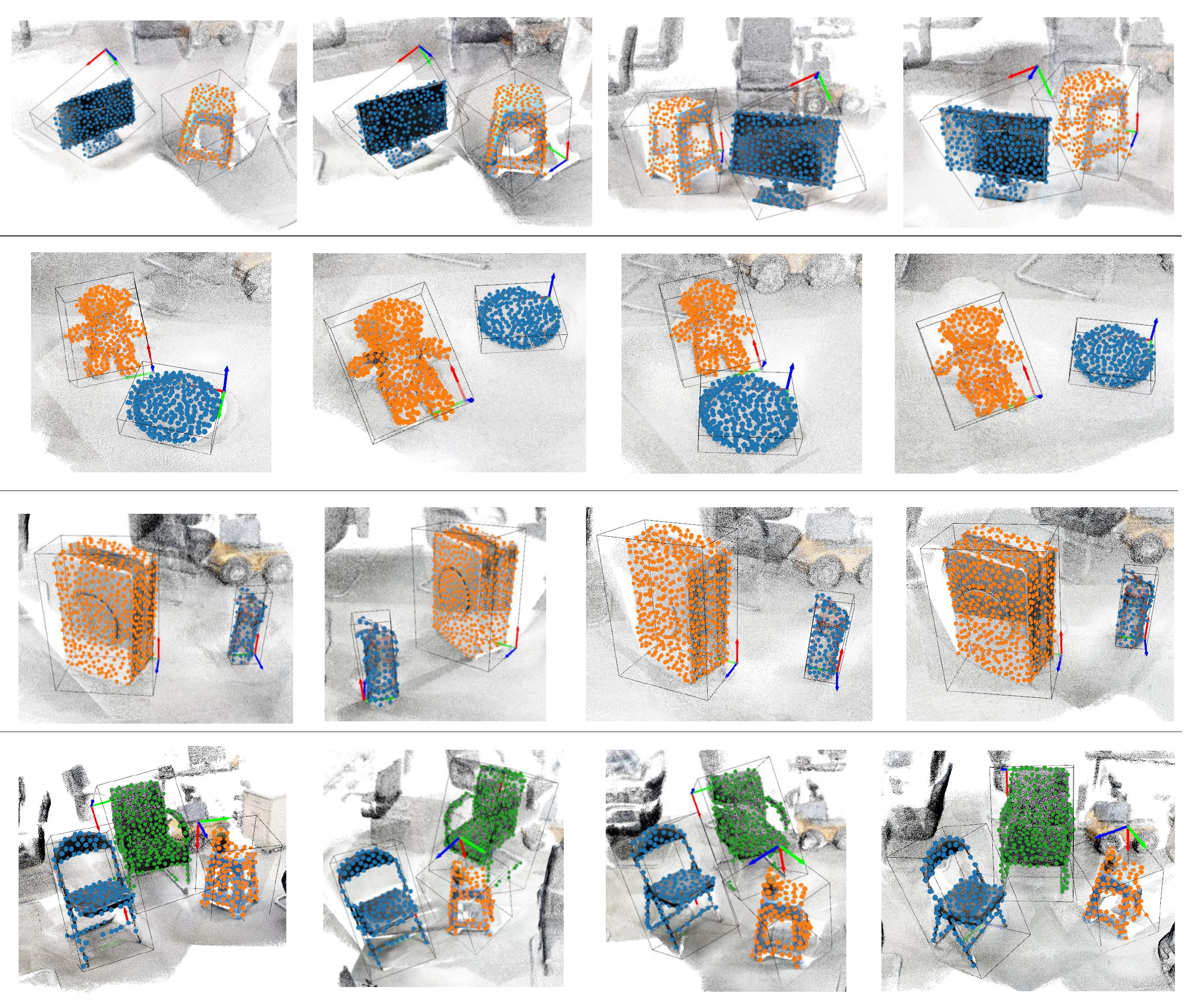}
    \caption{Results on \dataset dataset. Note that we detect and remove the ground for all baselines except InstSeg so only the points on the moving furniture are input. Point colors indicate segmentation and the bounding boxes show relative transformations.}
    \label{fig:full}
\end{figure}

\cref{tbl:ours-flow} shows the rigid flow estimation result against the baselines.
Apart from the influence of wrong per-scan segmentation and cross-scan associations, the iterative closest point (ICP)~\cite{besl1992method} method used to register object scans can also suffer from poor initializations.
Our approach not only reaches the lowest mean error, but also respects the motion consistency across multiple scans.\vspace{-1mm}
\begin{table}[!t]
{
\small
\centering
\setlength{\tabcolsep}{2.0pt}
\caption{Rigid flow estimation result on \dataset dataset. The numbers represent mean and standard deviation (+/-) of the EPE3D from all pairwise flows. Note the value here does not reflect real-world metrics because the scales are uniformly normalized.}
\label{tbl:ours-flow}
\begin{tabularx}{\linewidth}{c|cCCcC} 
\toprule
     & \footnotesize{DeepPart~\cite{yi2018deep}} & \footnotesize{NPP~\cite{hayden2020nonparametric}} & \footnotesize{Geometric} & \footnotesize{InstSeg~\cite{jiang2020pointgroup}} & Ours  \\ 
\midrule
Mean & 16.89    & 51.14 & 21.61     & 46.40                          & \textbf{11.01}            \\
+/-  & 11.39    & 18.38 & 9.76      & 20.73                          & \textbf{6.65}\\
\bottomrule
\end{tabularx}
}\vspace{-3mm}
\end{table}

\subsection{Ablation Study and System Analysis}\label{subsec:exp:ablation}
\vspace{-3pt}
\parahead{Effect of synchronization}
For permutation synchronization (\cref{subsec:flow}), we can directly feed the network-predicted flow vector $\flow^{kl}$ to subsequent steps instead of using synchronized $\hat{\flow}^{kl}$ (Ours: NS, NW), or use an unweighted version of the synchronization by setting all $w^{kl}=1$ (Ours: S, NW). 
However, as shown quantitatively in \cref{tbl:sapien-flow}, both variants result in higher flow error due to the failure to find consistent correspondences.
Similar results can be observed on \dataset dataset as demonstrated in the two sub-figures of \cref{fig:full-compare}, where direct flow prediction failed because the geometric variation is too large between two scans.

\parahead{Effect of $K$}
Our method can be naturally applied to an arbitrary number of views $K$ even if we train using 4 views, because by design the learnable parameters are unaware of the input counts.
As shown in \cref{fig:vars}, the segmentation accuracy improves given more views. This is because the introduction of additional scans helps build the connection between existing scans and benefits the `co-segmentation' process.

\begin{figure}[t]
    \centering
    \includegraphics[width=\linewidth]{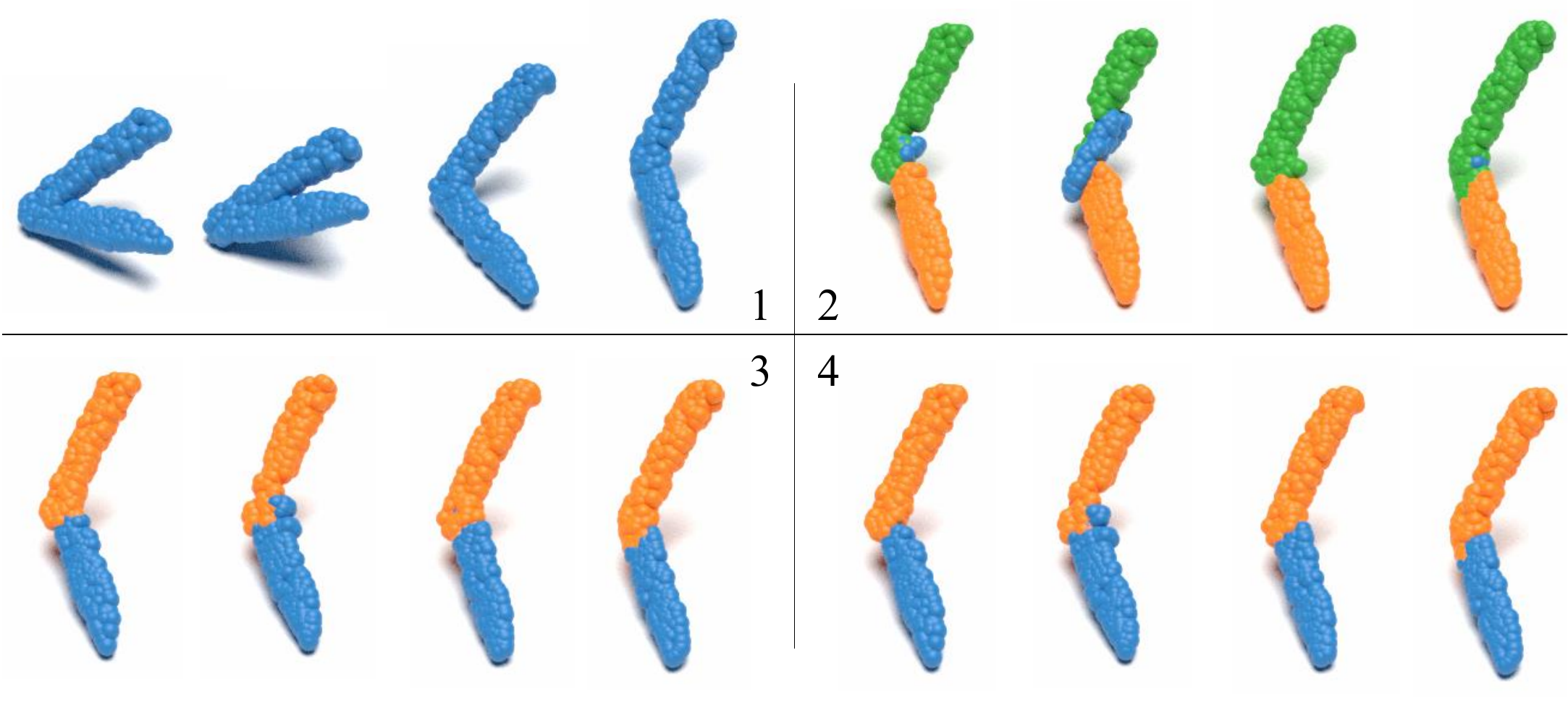}
    \caption{Iterative refinement on SAPIEN dataset. We show transformed and segmented point clouds according to recovered motions over iterations (shown in the middle).\vspace{-1mm}}
    \label{fig:iter}
\end{figure}

\parahead{Number of iterations}
As pointed out in \cref{subsec:group}, our pipeline can be run multiple iterations to refine the results and an example is given in \cref{fig:iter}.
Shown in \cref{fig:vars}, our method works better with more iterations because we estimate more accurate flows.
Moreover, more iterations are demonstrated to be unnecessary because previous iterations already lead to converged results.
\begin{figure}[!t]
    \centering
    \includegraphics[width=\linewidth]{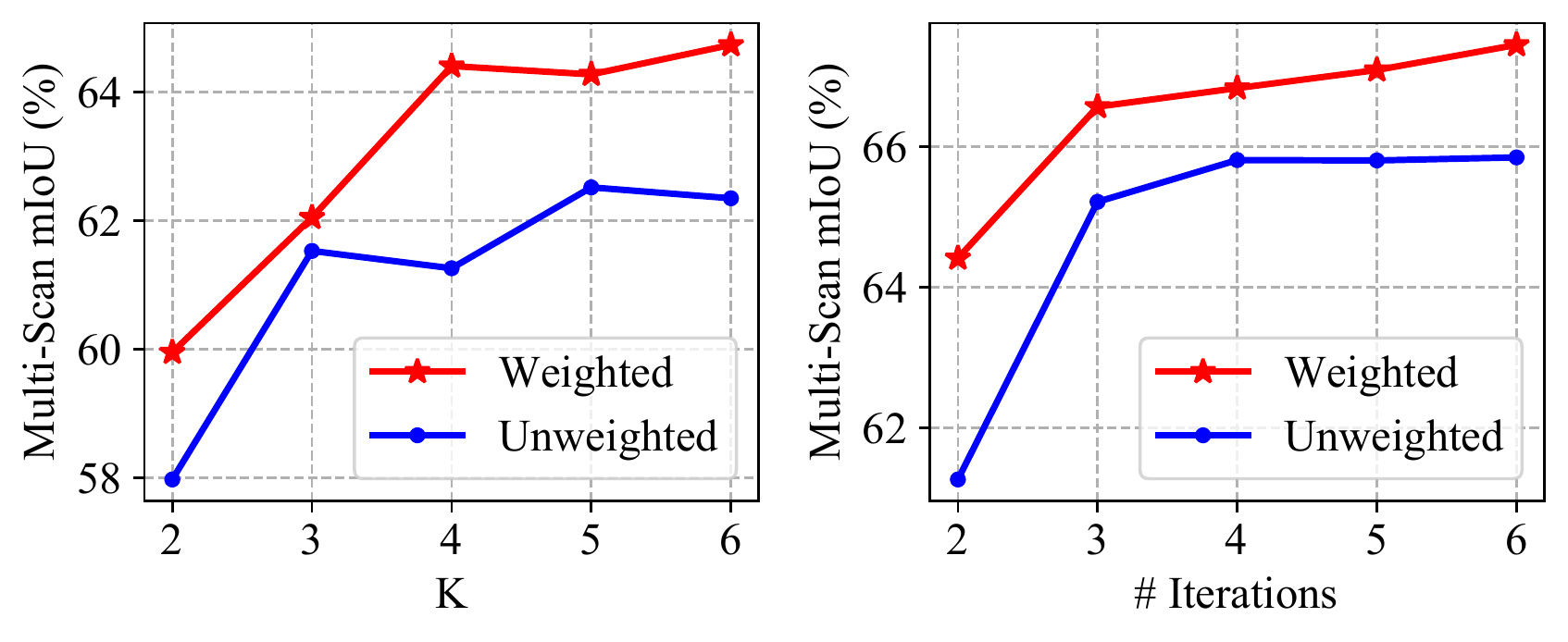}
    \caption{Influence of number of iterations (\textbf{left}) and number of views (\textbf{right}) on the final segmentation accuracy.\vspace{-4mm}}
    \label{fig:vars}
\end{figure}

\parahead{Timing}
Our experiments are conducted using an Nvidia GeForce GTX 1080 card. 
For the input of 4 scans, the running time of our full model is $\sim$870ms \emph{per iteration}. 
The entirety of a 4-iteration scheme hence takes $\sim$3.5s, while \cite{yi2018deep} and \cite{hayden2020nonparametric} take 11.5s and 60s resp. in comparison. 

\vspace{-0.5em}
\section{Conclusion}
\vspace{-3pt}
We presented \emph{MultiBodySync}, a pipeline for simultaneously segmenting and registering multiple dynamic scans with multiple rigid bodies.
We, for the first time, incorporated weighted permutation synchronization and motion segmentation synchronization into a fully-differentiable pipeline for generating consistent results across all input point clouds.
However, currently MultiBodySync is not scalable to a large number (like hundreds) of scans or rigid bodies.
Future directions include improvement of the pipeline's scalability and robustness in more complicated and dynamic settings.

\vspace{0.5em}

\begin{spacing}{0.65}
{\scriptsize
\noindent\textbf{Acknowledgements.}
We ack support from the China Scholarship Council, the Natural Science Foundation of China (No. 61521002), the Joint NSFC-DFG Research Program (No. 61761136018), a grant from Tsinghua-Tencent Joint Laboratory for Internet Innovation Technology, NSF grant CHS-1528025, a Vannevar Bush Faculty fellowship, a TUM/IAS Hans Fischer senior fellowship, and gifts from the Adobe, Amazon AWS, and Snap corporations.
Arrigoni was supported by the EU Horizon 2020 Research and Innovation Programme under project SPRING (No. 871245).}
\end{spacing}

{\small
\bibliographystyle{ieee_fullname}
\bibliography{main}
}

\clearpage
\setcounter{section}{0}
\newcommand\refpaper[1]{\cref{#1}}
\renewcommand{\thesection}{\Alph{section}}
\renewcommand{\thetable}{S\arabic{table}}
\renewcommand{\thefigure}{S\arabic{figure}}
\renewcommand{\theequation}{S.\arabic{equation}}
\title{\vspace{-2em}\name:\\Multi-Body Segmentation and Motion Estimation via 3D Scan Synchronization --- Supplementary Material\vspace{-3em}}
\author{}
\maketitle
In this supplementary material, we first give the proofs of the theorems in~\cref{supp:sec:proof}, then provide more details of our implementation and our dataset in~\cref{supp:sec:detail}.
Additional ablations and results are shown in~\cref{supp:sec:results}.

\section{Proofs of Theorems}
\label{supp:sec:proof}

\subsection{Theorem 1}
\begin{proof}
The energy function in \refpaper{eq:sync} can be written as:
\begin{equation}
\begin{aligned}
E(\ps) =& \sum_{k=1}^K \sum_{l=1}^K w^{kl} \lVert \Pm^k - \Pm^{kl} \Pm^l \rVert_F^2 \\
=& \sum_{k=1}^K \sum_{l=1}^K \sum_{i=1}^N w^{kl} \lVert \Pm^k_{:i} - \Pm^{kl} \Pm^l_{:i} \rVert^2 \\
=& \sum_{i=1}^N \sum_{k=1}^K \sum_{l=1}^K w^{kl} \lVert \Pm^k_{:i} \rVert^2 + w^{lk} \lVert \Pm^l_{:i} \rVert^2 \\
&- w^{kl}(\Pm^k_{:i})^\top(\Pm^{kl} \Pm^l_{:i}) - w^{lk}(\Pm^l_{:i})^\top(\Pm^{lk} \Pm^k_{:i}) \\
=& \sum_{i=1}^N 2\sum_{k=1}^K (\Pm^k_{:i})^\top \left( \sum_{l=1}^K w^{kl} (\Pm^k_{:i} - \Pm^{kl} \Pm^l_{:i} ) \right) \\
=& \sum_{i=1}^N 2\sum_{k=1}^K (\Pm^k_{:i})^\top \left( \left(w^{k} \Id_N \right) \Pm^k_{:i} - \sum_{l\neq k} w^{kl} \Pm^{kl} \Pm^l_{:i} \right) \\
=& 2\sum_{i=1}^N \ps_{:i}^\top \Lap \ps_{:i} = 2 \mathrm{tr}( \ps^\top \Lap \ps ).\nonumber
\end{aligned}
\end{equation}

The spectral solution additionally requires each column of $\ps$ to be of unit norm and orthogonal to others relaxing $\{\Pm^{kl}\in\Man\}_{k,l}$:
\begin{equation}
    \min_{\ps} \mathrm{tr}(\ps^\top \Lap \ps) \quad\mathrm{ s.t. }\quad \ps^{\top} \ps = \Id_N.
\end{equation}

This QCQP~(Quadratically Constrained Quadratic Program) is known to have the closed form solution revealed by generalized Rayleigh problem~\cite{horn2012matrix} (or similarly, the Courant-Fischer-Weyl min-max principle).
The solution is given by the $N$ eigenvectors of $\Lap$ corresponding to the smallest $N$ eigenvalues.
\end{proof}

\subsection{Theorem 2}
We first recall the spectral solution of the synchronization problem and then extend the result to the weighted variant we propose. For completeness, here we include $\Zm=\Gs \Gs^\top$, the \textbf{unweighted} motion segmentation matrix:
\begin{equation}
\Zm = \begin{bmatrix}
\zero & \Z^{12} & \dots & \Z^{1K} \\
\Z^{21} & \zero & \dots & \Z^{2K} \\
\vdots & \vdots & \ddots & \vdots \\
\Z^{K1} & \Z^{K2} & \dots & \zero \\
\end{bmatrix}.
\end{equation}
\begin{lemma}[Spectral theorem of synchronization]
In the noiseless regime and under spectral relaxation, the synchronization problem can be cast as
\begin{equation}
    \max_\U \mathrm{tr}(\U^\top \Zm \U) \quad\mathrm{ s.t. }\quad \U^\top \U = \Id_S,
\end{equation}
where $\U \in \R^{KN \times S}$ denotes the sought solution, \ie absolute permutations. Then each column in $\U$ will be one of the $S$ leading eigenvectors of matrix $\Zm$~\cite{arrigoni2019motion}:
\begin{equation}
    \U \cdot \mathrm{diag} (\sqrt{\lambda_1}, \dots, \sqrt{\lambda_S}) \approx \Gs = \begin{bmatrix}
\G^1 \\
\G^2 \\
\vdots \\
\G^K \\
\end{bmatrix},
\end{equation}
where $\lambda_1, \dots, \lambda_S$ are the leading eigenvalues of $\Zm$.
\end{lemma}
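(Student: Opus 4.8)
The plan is to recognize this as the classical Ky Fan trace-maximization result, coupled with a factorization argument that ties the optimal eigenvectors back to $\Gs$. First I would observe that in the noiseless regime the idealized block matrix obeys $\Zm = \Gs \Gs^\top$, so $\Zm$ is symmetric positive semidefinite with rank at most $S$. Under the mild assumption that every part is observed in at least one scan and the parts are mutually distinguishable, the stacked segmentation $\Gs$ has full column rank $S$; hence $\Zm$ has exactly $S$ strictly positive eigenvalues $\lambda_1 \geq \dots \geq \lambda_S > 0$, with the remaining $KN-S$ eigenvalues equal to zero. The relaxation is already baked into the constraint $\U^\top \U = \Id_S$, which replaces the combinatorial structure of $\Gs$ by mere orthonormality of the columns.

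Next I would solve the constrained trace maximization. Since $\Zm$ is symmetric, the problem $\max_\U \mathrm{tr}(\U^\top \Zm \U)$ subject to $\U^\top \U = \Id_S$ is a textbook Rayleigh--Ritz / Ky Fan problem whose optimal value equals $\sum_{i=1}^S \lambda_i$, attained precisely when the columns of $\U$ span the invariant subspace of the $S$ largest eigenvalues. I would invoke exactly the Courant--Fischer--Weyl characterization cited for Theorem 1 to conclude that an optimal $\U$ is given by the $S$ leading eigenvectors of $\Zm$, establishing the first half of the statement. When the top-$S$ spectrum is simple the eigenvectors are unique up to sign, which is the cleanest reading of ``each column is one of the $S$ leading eigenvectors.''

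To recover $\Gs$ itself I would pass through the singular value decomposition $\Gs = \U_S \, \mathrm{diag}(\sqrt{\lambda_1}, \dots, \sqrt{\lambda_S}) \, V^\top$, where $\U_S$ collects the $S$ leading eigenvectors and $V \in \R^{S \times S}$ is orthogonal; the singular values of $\Gs$ are the square roots of the eigenvalues of $\Gs \Gs^\top$, which is where the factor $\sqrt{\lambda_i}$ enters. Multiplying on the right by $V$ gives $\U_S \cdot \mathrm{diag}(\sqrt{\lambda_1}, \dots, \sqrt{\lambda_S}) = \Gs V$, so the scaled eigenvectors reproduce $\Gs$ up to the right orthogonal factor $V$. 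This $V$ is precisely the gauge freedom in the choice of basis for the universe of parts, and it is exactly what the symbol $\approx$ in the statement absorbs; the subsequent row-wise softmax described in the main text then pins down a representative of this equivalence class.

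The main obstacle I anticipate is not the trace maximization but reconciling the idealized $\Zm = \Gs \Gs^\top$ with the matrix actually assembled, whose diagonal $N \times N$ blocks are set to $\zero$ rather than to $\G^k (\G^k)^\top$. I would treat the zeroed diagonal as a bounded-norm perturbation of $\Gs \Gs^\top$ and argue, via a Weyl eigenvalue bound and a Davis--Kahan subspace bound exploiting the spectral gap between $\lambda_S$ and the vanishing tail, that the top-$S$ invariant subspace is stable, so the recovered $\U$ still lies close to the range of $\Gs$. This perturbation step, rather than the clean linear algebra, is where the ``noiseless regime'' and ``mild assumptions'' hypotheses genuinely do their work.
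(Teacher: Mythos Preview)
The paper does not actually prove this lemma: it is stated with a citation to Arrigoni and Pajdla and then used as a stepping stone for Theorem~2. So there is no ``paper's own proof'' to match against, and your proposal is in fact more detailed than anything the paper supplies for this statement.

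Your argument is correct. The Ky~Fan / Courant--Fischer step for the trace maximization is exactly the right tool, and the SVD route to recover $\Gs$ up to a right orthogonal factor is sound. One simplification you could make, and which the paper exploits implicitly in its proof of Theorem~2, is that the columns of $\Gs$ are already mutually orthogonal because each row of every $\G^k$ is one-hot; hence $\Gs^\top\Gs$ is diagonal and $\Zm\Gs = \Gs(\Gs^\top\Gs) = \Gs\evals$ directly exhibits the columns of $\Gs$ as (unnormalized) eigenvectors without passing through a general SVD. In that case your orthogonal factor $V$ collapses to a signed permutation, which sharpens the meaning of the $\approx$.

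Your final paragraph on the zeroed diagonal blocks is a point the paper simply elides: in the Theorem~2 proof it writes $\Zm=\Gs\Gs^\top$ even though the assembled $\Zm$ has $\zero$ on the block diagonal. Treating this as a bounded perturbation and invoking Weyl / Davis--Kahan is a legitimate way to close the gap; it is more careful than the paper itself, not less.
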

We now recall the weighted synchronization problem. 
Here we assume the $\Z^{kl}$ matrices are binary and satisfy the properties listed in \cite{arrigoni2019motion}. The weighted synchronization matrix $\tilde{\Zm}$ is composed of a set of anisotropically-scaled $\Z^{kl}$ matrices:
\begin{equation}\label{eq:ZW}
\ZW = \begin{bmatrix}
\zero & \frac{1}{\sigma^{12}}\Z^{12} & \dots & \frac{1}{\sigma^{1K}}\Z^{1K} \\
\frac{1}{\sigma^{21}}\Z^{21} & \zero & \dots & \frac{1}{\sigma^{2K}}\Z^{2K} \\
\vdots & \vdots & \ddots & \vdots \\
\frac{1}{\sigma^{K1}}\Z^{K1} & \frac{1}{\sigma^{K2}}\Z^{K2} & \dots & \zero \\
\end{bmatrix}.
\end{equation}

Remind that in the main paper we use the \emph{unweighted} synchronization (\ie without $\frac{1}{\sigma}$) by cancelling the effect of the weights via a normalization.~\cref{thm:weightedSpectral}, which we now state more formally, is then concerned about the linear scaling of the solution proportional to the weights in the motion segmentation matrix:
\setcounter{thm}{1}
\begin{thm}[Weighted synchronization for segmentation]
\label{thm:weightedSpectral}
The spectral solution to the weighted version of the synchronization problem
\begin{equation} \label{eq:uzu}
    \max_{\UW} \mathrm{tr}(\UW^{\top} \tilde{\Zm} \UW) \quad\mathrm{ s.t. }\quad \UW^{\top} \UW = \Id_S
\end{equation}
is given by the columns of $\Gw$: 
\begin{equation}\label{eq:Ug}
    \UW \cdot \mathrm{diag} (\sqrt{\evalw_1}, \dots, \sqrt{\evalw_S}) \approx \Gw = \begin{bmatrix}
\G^1 \D^1 \\
\G^2 \D^2 \\
\vdots \\
\G^K \D^K \\
\end{bmatrix},
\end{equation}
Here $\evalw_1, \dots, \evalw_S$ are the leading eigenvalues of $\tilde{\Zm}$, and $(\D^1, \dots, \D^K \in \R^{S\times S})$ are diagonal matrices. 
In other words, the columns of $\Gw$ being the eigenvectors of $\ZW$ are related to the non-weighted synchronization by a piecewise linear anisotropic scaling.
\end{thm}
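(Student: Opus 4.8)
The plan is to exploit the fact that the relative motion--segmentation blocks factor through the absolute associations, namely $\Z^{kl}=\G^k(\G^l)^\top$, so that the $(k,l)$-block of the weighted matrix $\ZW$ in~\cref{eq:ZW} equals $\frac{1}{\sigma^{kl}}\G^k(\G^l)^\top$. First I would observe that the column space of the $k$-th block-row of $\ZW$ is contained in $\mathrm{col}(\G^k)$, since each of its blocks is $\G^k$ left-multiplying something. As $\ZW$ is symmetric (using $\sigma^{kl}=\sigma^{lk}>0$), its range therefore lies in the $KS$-dimensional subspace $W=\mathrm{col}(\mathrm{blkdiag}(\G^1,\dots,\G^K))$, under the mild assumption that each part is non-empty in each scan so that every $\G^k$ has full column rank $S$. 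Consequently every eigenvector of $\ZW$ with nonzero eigenvalue---in particular the $S$ leading ones that constitute the spectral solution $\UW$ of~\cref{eq:uzu}---has the block form $[\G^1\dvec^1;\dots;\G^K\dvec^K]$, which already forces $\UW=[\G^1\bm{\Delta}^1;\dots;\G^K\bm{\Delta}^K]$ for some $S\times S$ matrices $\bm{\Delta}^k$.

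The second step pins down the $\dvec^k$ and upgrades the $\bm{\Delta}^k$ to genuinely \emph{diagonal} matrices. Writing $\Sm^l:=(\G^l)^\top\G^l=\mathrm{diag}(n^l_1,\dots,n^l_S)$, where $n^l_s$ is the number of points of part $s$ in scan $l$, a short computation shows that $\ZW$ restricted to $W$ acts, in the $\dvec$-coordinates, as $(\dvec^k)_k\mapsto(\sum_{l\neq k}\tfrac{1}{\sigma^{kl}}\Sm^l\dvec^l)_k$, and reading this componentwise in the part index $s$ makes it block-diagonalize into $S$ independent $K\times K$ problems governed by the nonnegative matrices $M^{(s)}$ with $M^{(s)}_{kl}=n^l_s/\sigma^{kl}$ and zero diagonal. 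Each $M^{(s)}$ is irreducible (all $n^l_s>0$, all $\sigma^{kl}$ finite), so Perron--Frobenius gives a simple positive leading eigenvalue $\evalw_{s}$ with an entrywise-positive eigenvector $(d^1_s,\dots,d^K_s)$; the corresponding eigenvector of $\ZW$ is the indicator of part $s$ across all scans, with the block for scan $k$ scaled by $d^k_s$. These $S$ vectors have disjoint supports within each scan, hence are mutually orthogonal, and collecting them (absorbing the orthonormalization into the $\D^k$) yields exactly $\Gw=[\G^1\D^1;\dots;\G^K\D^K]$ with $\D^k=\mathrm{diag}(d^k_1,\dots,d^k_S)$; rescaling by $\mathrm{diag}(\sqrt{\evalw_{1}},\dots,\sqrt{\evalw_{S}})$ is~\cref{eq:Ug}. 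Setting $\sigma^{kl}\equiv1$ (and restoring the diagonal blocks, as in the idealization behind the spectral theorem of synchronization) turns every $M^{(s)}$ into a matrix with the all-ones Perron eigenvector and eigenvalue $\lambda_s=\sum_k n^k_s$, recovering the unweighted statement and confirming that the anisotropy of $\D^k$ is driven purely by per-part point counts.

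The last step is to certify that these $S$ Perron pairs really are the dominant part of the spectrum, so that the spectral solution of~\cref{eq:uzu} is precisely their span. Since the range of $\ZW$ sits inside $W$, every eigenvalue associated with $W^\perp$ is exactly zero; within $W$ the spectrum is the union of the spectra of the $M^{(s)}$, whose non-Perron eigenvalues are strictly dominated in magnitude by the respective $\evalw_{s}$. The \emph{mild assumptions} are exactly what is needed so that $\{\evalw_{s}\}_{s=1}^S$ also exceed the magnitude of every non-Perron eigenvalue of every $M^{(s')}$---comfortably true unless the part sizes are wildly skewed across scans---and I expect this spectral-gap bookkeeping, rather than the factorization algebra, to be the main obstacle. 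I would close it by stating the required inequality explicitly as the hypothesis of the theorem, and by recalling (as in the main text) that it is precisely this anisotropy that motivates pre-factoring $q^{kl}=\mathrm{mean}(\Znet^{kl})$ out of $\Znet^{kl}$ so that one instead solves the clean unweighted synchronization.
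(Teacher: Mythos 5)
Your proposal is correct, and it reaches the same computational core as the paper --- the reduction of the weighted problem to $S$ independent $K\times K$ eigenproblems in the per-scan scalings $\dvec^s$, governed by matrices with entries $w^{kl}n^l_s$ (your $M^{(s)}$ is the paper's $\J^s$ of \cref{eq:sysJ}, up to the diagonal terms $E^{kk}_{ss}$, which appear in the paper only because it implicitly works with $\Zm=\Gs\Gs^\top$ including its diagonal blocks) --- but by a genuinely different and stronger route. The paper proceeds by ansatz: it \emph{posits} the block form $\Gw=[\G^1\D^1;\dots;\G^K\D^K]$, plugs it into $\ZW\Gw^s=\evalw_s\Gw^s$, cancels the column-selector $\G^k$, and concludes existence of the scalings whenever the reduced system \cref{eq:sysJEigen} is solvable with $\E^{kl}$ known and $\J^s$ having real eigenvectors; it never argues that \emph{all} relevant eigenvectors have this form, nor that the constructed ones correspond to the leading eigenvalues. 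You instead derive the block structure from the fact that $\mathrm{range}(\ZW)\subseteq\mathrm{col}(\mathrm{blkdiag}(\G^1,\dots,\G^K))$, so every eigenvector with nonzero eigenvalue is forced into the subspace, then block-diagonalize per part and invoke Perron--Frobenius (after noting $M^{(s)}$ is diagonally similar to a symmetric matrix, hence has real spectrum) to get positive, simple per-part scalings; this buys a characterization of the entire nonzero spectrum and an explanation of why the $\D^k$ are positive diagonal, which the paper simply assumes. You also correctly isolate the one genuine gap --- that the $S$ Perron values must dominate the second-largest eigenvalue of every other $M^{(s')}$ for them to be the \emph{leading} eigenvalues of $\ZW$ --- and propose stating it as a hypothesis; the paper buries exactly this issue in the words ``mild assumptions'' and the ``$\approx$'' of \cref{eq:Ug}, so you are not missing anything it supplies. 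Two small cautions: the domination should be phrased in the algebraic (not magnitude) ordering, since zero-diagonal $M^{(s)}$ can have a negative eigenvalue of equal modulus (e.g.\ $K=2$), and these negative eigenvalues are harmless for the trace maximization in \cref{eq:uzu}; and your full-column-rank assumption on each $\G^k$ (every part present in every scan) should be recorded, as it is what makes the $\dvec$-coordinates well defined --- the paper needs it too, implicitly, when it cancels $\G^k$.
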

\begin{proof}
We begin by the observation that $\K^k=\G^{k\top}\G^k$ is a diagonal matrix where $K^k_{ss}$ counts\footnote{According to our assumption, this `count' hereafter is only valid when $\Z^{kl}$s are binary and can be viewed as \emph{soft counting} when such an assumption is relaxed.} the number of points in point cloud $k$ belonging to part $s$. Hence, each element along $\Gs^\top\Gs=\sum_{k=1}^K (\K^k) $ counts the number of points over all point clouds that belong to part $s$. Because $\Zm=\Gs\Gs^\top$, we have the following spectral decomposition $\Zm\Gs=\Gs\evals$~\cite{arrigoni2019motion}:
\begin{equation}
    \Zm\Gs = \Gs\Gs^\top\Gs = \Gs \sum\limits_{k=1}^K \G^{k\top}\G^k = \Gs \evals.
\end{equation}

To simplify notation we overload $w^{kl}$ by setting $w^{kl}=\frac{1}{\sigma^{kl}}$ for the rest of this subsection.
Let us now write $\ZW\Gw$ in a similar fashion and seek the similar emergent property of eigen-decomposition:
\begin{align}\label{eq:ZWGw}
    \ZW\Gw = \begin{bmatrix}
\sum\limits_{l=1}^K w^{1l} \Z^{1l} \G^l \D^l \\
\sum\limits_{l=1}^K w^{2l} \Z^{2l} \G^l \D^l \\
\vdots \\
\sum\limits_{l=1}^K w^{Kl} \Z^{Kl} \G^l \D^l \\
\end{bmatrix}.
\end{align}
Then, using $\Z^{kl}=\G^k\G^{l\top}$ we can express~\cref{eq:ZWGw} as:
\begin{align}\label{eq:Zg}
    \ZW\Gw &= \begin{bmatrix}
\sum\limits_{l=1}^K w^{1l} \G^1\G^{l\top} \G^l \D^l \\
\sum\limits_{l=1}^K w^{2l} \G^2\G^{l\top} \G^l \D^l \\
\vdots \\
\sum\limits_{l=1}^K w^{Kl} \G^K\G^{l\top}\G^l \D^l \\
\end{bmatrix}\\
&= \begin{bmatrix}
\G^1\sum\limits_{l=1}^K w^{1l} \G^{l\top} \G^l \D^l \\
\G^2\sum\limits_{l=1}^K w^{2l} \G^{l\top} \G^l \D^l \\
\vdots \\
\G^K\sum\limits_{l=1}^K w^{Kl} \G^{l\top}\G^l \D^l \\
\end{bmatrix} = \begin{bmatrix}
\G^1 \Hw^1 \\
\G^2 \Hw^2 \\
\vdots \\
\G^K \Hw^K
\end{bmatrix}\label{eq:GH}
\end{align}
where:
\begin{align}
    \Hw^k = \sum\limits_{k=1}^K w^{kl} \G^{l\top} \G^l \D^l.
\end{align}
$\Hw$ is a diagonal matrix because $\D^l$ is diagonal by assumption.
Note that, the first part in the summation is assumed to be a \emph{known}\footnote{We will see later in~\cref{rem:known} why this is only an assumption.} diagonal matrix (see the beginning of proof):
\begin{align}\label{eq:EH}
    \E^{kl}=w^{kl} \G^{l\top} \G^l,
\end{align} 
This form is very similar to~\cref{eq:Ug} scaled by the corresponding diagonal matrices. 
Let us know consider the $s^{\mathrm{th}}$ column of $\Gw$ responsible for part $s$. We are interested in showing that such column is an eigenvector of $\ZW$: 
\begin{align}\label{eq:Zgeigen}
    \ZW \Gw^s = \evalw_s  \Gw^s.
\end{align}
In other words, we seek the existence of $\evalw_s$ such that~\cref{eq:Zgeigen} is satisfied. Moreover, a closed form expression of $\evalw_s$ would allow for the understanding of the effect of the weights on the problem. Let us now plug~\cref{eq:Ug} and~\cref{eq:GH} into~\cref{eq:Zgeigen} to see that:
\begin{align}\label{eq:Zgeigen2}
    \begin{bmatrix} (\G^1 \Hw^1)^s \\
    (\G^2 \Hw^2)^s \\
    \vdots \\
    (\G^K \Hw^K)^s 
    \end{bmatrix} = \evalw_s \begin{bmatrix} (\G^1 \D^1)^s \\
    (\G^2 \D^2)^s \\
    \vdots \\
    (\G^K \D^K)^s 
    \end{bmatrix}.
\end{align}
As $\G^k$ is a binary matrix, it only actas as a column selector, where for a single part $s$, a column of the motion segmentation $\Gw$ should contain only ones. We can use this idea and the diagonal nature of $\ZW \Gw^s$ to cancel $\G^k$ on each side. Re-arranging the problem in terms of scalars on the diagonal yields:
\begin{align}\label{eq:sys}
\systeme*{H^1_{ss}= \sum\limits_{l=1}^K E^{1j}_{ss}D^{l}_{ss}=\evalw_s D^1_{ss}, H^2_{ss}=\sum\limits_{l=1}^K E^{2j}_{ss}D^{l}_{ss}=\evalw_s D^2_{ss}, &&\mathrel{\makebox[\widthof{=}]{\vdots}}\hfill, H^K_{ss}= \sum\limits_{l=1}^K E^{Kj}_{ss}D^{l}_{rr}=\evalw_s D^K_{ss}}    
\end{align}
where $E$ is as defined in~\cref{eq:EH}.
Note that both $D$ and $\evalw_s$ are unknowns in this seemingly non-linear problem. Yet, we can re-arrange
~\cref{eq:sys} into another eigen-problem:
\begin{align}\label{eq:sysJEigen}
    \J^s\dvec^s=\evalw_s^\prime\dvec^s,
\end{align} 
where:
\begin{align}\label{eq:sysJ}
    \J^s = \begin{bmatrix}
    E^{11}_{ss} & E^{12}_{ss} & \cdots & E^{1K}_{ss}\\
    E^{21}_{ss} & E^{22}_{ss} & \cdots & E^{2K}_{ss}\\
    \vdots & \ddots & \vdots\\
    E^{K1}_{ss} & E^{K2}_{ss} & \cdots & E^{KK}_{ss}
    \end{bmatrix} \, \dvec^s = \begin{bmatrix}
    D^1_{ss}\\
    D^2_{ss}\\
    \vdots \\
    D^K_{ss}
    \end{bmatrix}. 
\end{align}

Hence, we conclude that the eigenvectors of the weighted synchronization have the form of~\cref{eq:sysJEigen} if and only if we can solve~\cref{eq:Ug}. This is possible as soon as $\E^{kl}$ are known and $\J^s$ has real eigenvectors. Besides an \emph{existence} condition,~\cref{eq:Ug} also provides an explicit closed form relationship between the weights and the eigenvectors once $\E^{kl}$ are available.


\end{proof}
\begin{remark}\label{rem:known}
Note that the symmetric eigen-problem given in~\cref{eq:sysJ} only requires the matrix $\E^{kl}$ for all $k,l$. By definition, each element along the diagonal of $\E^{kl}=w^{kl} \G^{k\top} \G^l$ denotes the number of points in each point cloud belonging to each part weighted by $w$. Hence, it does not require the complete knowledge on the part segmentation but only the amount of points per part. While this is unknown in practice, for the sake of our theoretical analysis, we might assume the availability of this information. Hence, we could speak of solving~\cref{eq:sysJEigen} for each part $s$.
\end{remark}
\begin{remark}
It is also interesting to analyze the scenario where one assumes $\dvec^s=\one$ for each $s$. In fact, this is what would happen if one were to naively use the unweighted solution for a weighted problem, \ie use $\Gw$ itself as the estimate of motion segmentation, as our closed form expression for $\D^k$ (\cref{eq:sysJ}) cannot be evaluated in test time. Then, assuming $\D^k$ to be the identity, for each $k$ it holds:
\begin{align}
    \sum\limits_{l=1}^K E^{kl}_{ss} &= \sum\limits_{l=1}^K w^{kl} \G^{k\top}(\G^l)^s \\
    &= w^{k1}\G^{1\top}(\G^1)^s + \dots + w^{kK}\G^{K\top}(\G^K)^s \nonumber \\
    &= \w_k \cdot \begin{bmatrix} \G^{1\top}(\G^1)^s &  \cdots & \G^{K\top}(\G^K)^s \end{bmatrix} \nonumber \\
    &= \w_{k} \bm{\varphi}^s = \evalw_s^\prime \label{eq:wlambda}.
\end{align}
where $(\G^l)^s$ is the $s$-th column of $\G^l$.
The final equality follows directly from~\cref{eq:sys} when $D_{ss}=1$.
Note that we can find multiple weights $\w_{k}$ satisfying~\cref{eq:wlambda}. For instance, if $\bm{\varphi}$ and $\lambda$ were known, one solution for any $s$ would be:
\begin{equation}
    w^{kl} = \frac{\evalw_s}{K\varphi_k^s}.
\end{equation}
Because (i) we cannot assume a uniform prior on the number of points associated to each part and (ii) it would be costly to perform yet another eigendecomposition, we choose to cancel the effect of the predicted weights $w_{ij}$ as we do in the paper by a simple normalization procedure. However, such unweighted solution would only be possible because our design encoded the weights in the norm of each entry in the predicted $\Znet^{kl}$.
\end{remark}

\section{Implementation Details}
\label{supp:sec:detail}

\subsection{Network Structures}

\subsubsection{Flow Prediction Network}
We adapt our own version of flow prediction network $\flownet$ from PointPWC-Net~\cite{wu2019pointpwc} by changing layer sizes and the number of pyramids.
As illustrated in \cref{fig:pointpwc}, the network predicts 3D scene flow in a coarse-to-fine fashion.
Given input $\X^k$ as source point cloud and $\X^l$ as target point cloud, a three-level pyramid is built for them using furthest point sampling as $\{ \X^{k,(0)}=\X^k, \X^{k,(1)}, \X^{k,(2)} \}$ and $\{ \X^{l,(0)}=\X^l, \X^{l,(1)}, \X^{l,(2)} \}$, with point counts being 512, 128, 32, respectively.
Similarly, we denote the flow predicted at each level as $\{ \flow^{kl,(0)},\flow^{kl,(1)},\flow^{kl,(2)} \}$.
Per-point features for all points are then extracted with dimension 128, 192 and 384 for each hierarchy.
A 3D `Cost Volume'~\cite{kendall2017end} is then computed for the source point cloud by aggregating the features from $\X^k$ and $\X^l$ for the point pyramid, with feature dimension 64, 128 and 256.
This aggregation uses the neighborhood information relating the target point cloud and the warped source point cloud in a patch-to-patch manner.
The cost volume, containing valuable information about the correlations between the point clouds, is fed into a scene flow prediction module for final flow prediction.
The predicted flow at the coarser level can be upsampled via interpolation and help the prediction of the finer level.
Readers are referred to~\cite{wu2019pointpwc} for more details.

\begin{figure}[t]
    \centering
    \includegraphics[width=\linewidth]{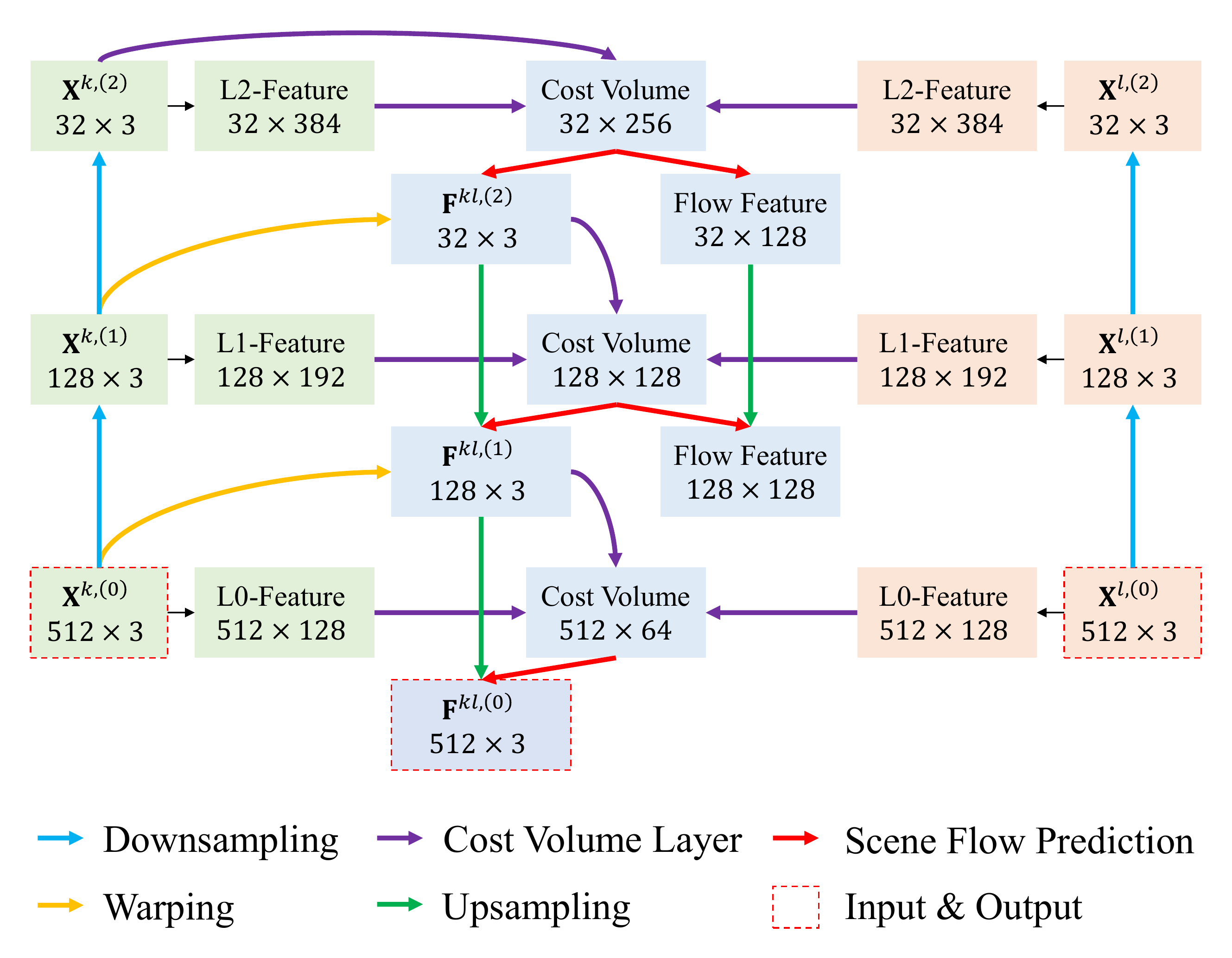}
    \caption{Our adapted version of PointPWC-Net $\flownet$. Each rectangular block denotes a tensor, whose size is written as $N\times C$ (batch dimension is ignored) below its name, with $N$ being the number of points and $C$ being the feature dimension. The network is composed of 3 hierarchical levels. At each level, features from the two input point clouds are fused via a Cost Volume Layer, which digests warped point cloud and features from the upsampled coarse flow estimated from the last level and provides a cost volume for flow prediction.}
    \label{fig:pointpwc}
\end{figure}

\subsubsection{Confidence Estimation Network}

The confidence estimation network $\confnet$ we use, adapted from OANet (Order-Aware Network)~\cite{zhang2019learning}, learns inlier probability of point correspondences.
In our case, each correspondence is represented as a $\R^7$ vector as described in the main paper.
Different from other network architectures like PointNet~\cite{qi2017pointnet}, OANet features in the novel differentiable pooling (DiffPool) and unpooling (DiffUnpool) operations as well as the order-aware filtering block, which are demonstrated to effectively gather local context and are hence useful in geometric learning settings, especially for outlier rejection~\cite{bian2017gms}.

The network starts and ends with 6 PointCN~\cite{moo2018learning} layers, which globally exchanges the point feature information by context normalization (\ie whitening along the channel dimension to build cross-point relationship).
In between the PointCNs lies the combination of DiffPool layer, order-aware filtering block and DiffUnpool layer.
The DiffPool layer learns an $N\times M$ soft assignment matrix, where each row represents the classification score of each point being assigned to one of the $M$ `local clusters'.
These local clusters represent local structures in the correspondence space and are implicitly learned.
As the $M$ clusters are in canonical order, the feature after the DiffPool layer is permutation-invariant, enabling the order-aware filtering block afterward to apply normalization along the spatial dimension (\ie, `Spatial Correlation Layer') for capturing a more complex global context. 
In our $\confnet$, we choose $M=64$.

\subsubsection{Motion Segmentation Network}
The architecture of $\segnet$ has been already introduced in the main paper.
Here we elaborate how the transformations $\Tp_i^{kl}$ are estimated by PointNet++.
The input to the network is the stacked $[ (\X^k)^\top \; (\hat{\flow}^{kl})^\top ]^\top \in \R^{6\times N}$ and the output is in $\R^{12 \times N}$, where for each point we take the first 9 dimensions as the elements in the rotation matrix and the last 3 dimensions as the translation vector.

In practice, direct transformation estimation from the PointNet++ is not accurate. 
Given that we have already obtained the flow vectors, instead of estimating $\Tp^{kl}_i$, we compute a residual motion \wrt the given flow similar to the method in \cite{yi2018deep}.
Specifically, when the actual outputs from the network are $\Rot_\mathrm{net} \in \R^{3\times 3}$ and $\tra_\mathrm{net} \in \mathbb{R}^{3}$, the transformations used in subsequent steps of the pipeline $\Tp^{kl}_i = [ \Rp_i^{kl} | \tp^{kl}_i ]$ are computed as follows:
\begin{equation}
    \Rp_i^{kl} = \Rot_\mathrm{net} + \Id_3, \quad
    \tp^{kl}_i = \tra_\mathrm{net} - \Rot_\mathrm{net} \x^k_i + \bm{f}^{kl}_i.
\end{equation}
Note that we do not ensure $\Tp_i^{kl}$ is in SE(3) with SVD-like techniques.
In fact the transformation is not directly supervised (neither in this module nor in the entire pipeline) and the nearest supervision comes from $\bm{\beta}^{kl}$ matrix through Eq (9).
This avoids the efforts to find a delicate weight for balancing the rotational and translational part of the transformation.

\subsection{Pose Computation and Iterative Refinement}
\label{supp:subsec:iter}

Given the synchronized pairwise flow $\hat{\bm{f}}^{kl}$ and motion segmentation $\G^k$, we estimate the motion separately for each rigid part using a weighted Kabsch algorithm~\cite{kabsch1976solution}.
The weight for point $\x_i^k$ and the rigid motion $s$ between $\X^k$ and $\X^l$ is taken as $c^{kl}_i G^k_{is}$.
We then use similar techniques as in~\cite{gojcic2020learning,huang2019learning} to estimate the motions separately for each part.

\begin{figure}[t]
    \centering
    \includegraphics[width=\linewidth]{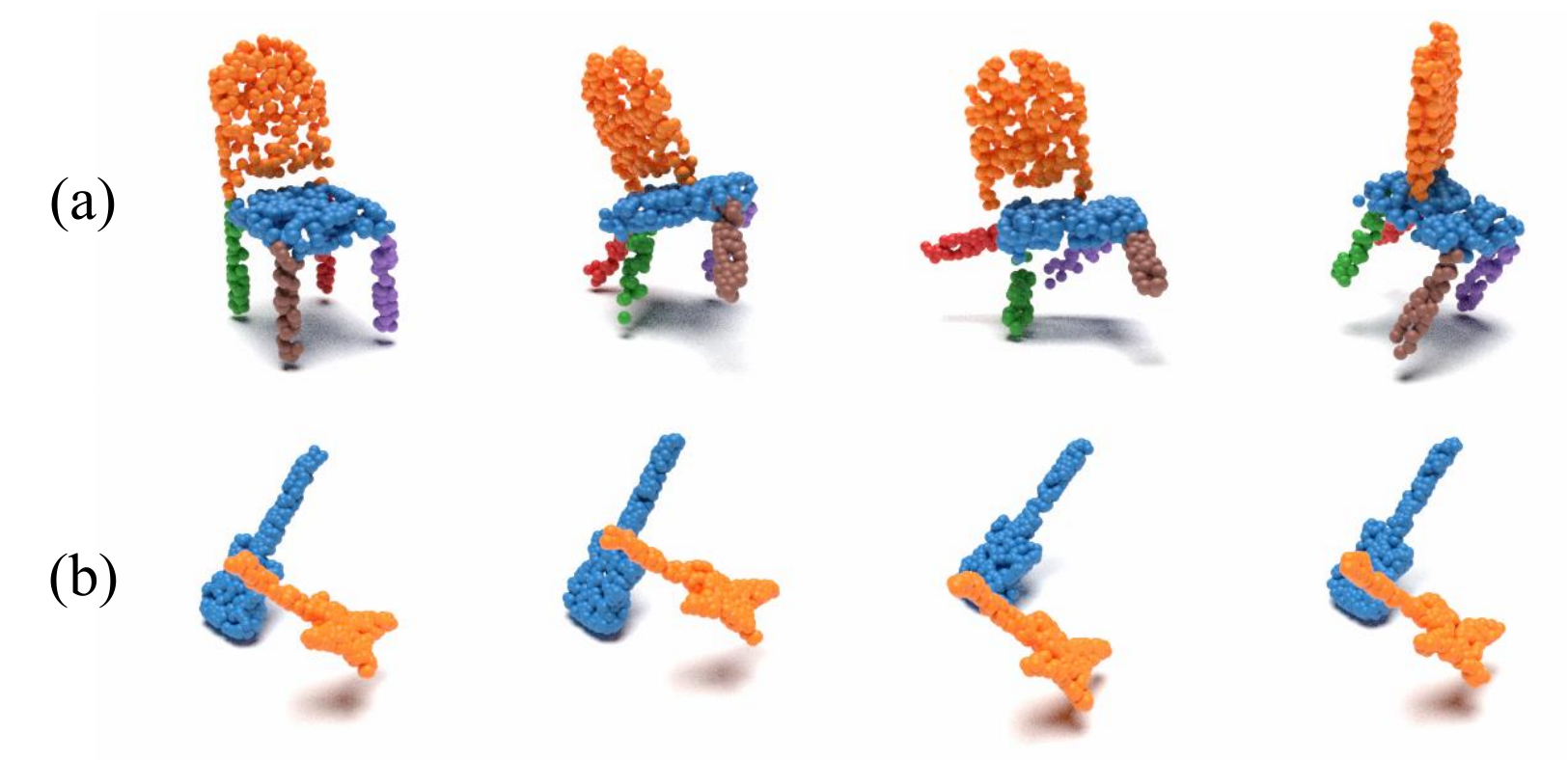}
    \caption{Examples from our training set for (a) articulated objects and (b) multiple solid objects. Different colors indicate rigidly moving parts.}
    \label{fig:training-vis}
\end{figure}

\begin{table}

\small
\centering
\caption{Training and validation categories from \cite{yi2016scalable} used for articulated objects.}
\label{tbl:sapien-training}
\begin{tabular}{c|cccc} 
\toprule
\multirow{2}{*}{\begin{tabular}[c]{@{}c@{}}Training\\Categories\\ \end{tabular}} & {\cellcolor[rgb]{0.98,0.898,0.898}}Table     & {\cellcolor[rgb]{0.965,0.984,0.816}}Chair  & {\cellcolor[rgb]{0.796,1,0.78}}Plane   & {\cellcolor[rgb]{0.839,0.894,1}}Car        \\
                                                                                 & {\cellcolor[rgb]{0.957,0.804,0.976}}Guitar   & {\cellcolor[rgb]{0.78,1,0.973}}Bike        & {\cellcolor[rgb]{1,0.659,0.894}}Suitcase    &                                            \\ 
\midrule
\multirow{2}{*}{\begin{tabular}[c]{@{}c@{}}Validation\\Categories\end{tabular}}  & {\cellcolor[rgb]{1,0.859,0.659}}Lamp         & {\cellcolor[rgb]{0.871,0.871,0.871}}Pistol & {\cellcolor[rgb]{0.82,0.706,0.996}}Mug & {\cellcolor[rgb]{0.78,1,0.863}}Skateboard  \\
                                                                                 & {\cellcolor[rgb]{0.871,0.992,0.749}}Earphone & {\cellcolor[rgb]{0.918,0.525,0.792}}Rocket & {\cellcolor[rgb]{0.678,0.98,1}}Cap     &                                            \\
\bottomrule
\end{tabular}
\end{table}

\begin{table}
\small
\centering
\caption{Training and validation categories from \cite{yi2016scalable} used for multiple solid objects.}
\label{tbl:whole-training}
\begin{tabular}{c|cccc} 
\toprule
\multirow{2}{*}{\begin{tabular}[c]{@{}c@{}}Training\\Categories\\ \end{tabular}} & {\cellcolor[rgb]{0.98,0.898,0.898}}Table     & {\cellcolor[rgb]{0.816,0.859,0.984}}Knife  & {\cellcolor[rgb]{0.796,1,0.78}}Plane     & {\cellcolor[rgb]{0.839,0.894,1}}Car         \\
                                                                                 & {\cellcolor[rgb]{0.957,0.804,0.976}}Guitar   & {\cellcolor[rgb]{0.78,1,0.973}}Bike        & {\cellcolor[rgb]{1,0.659,0.894}}Suitcase & {\cellcolor[rgb]{0.961,0.953,0.718}}Laptop  \\ 
\midrule
\multirow{2}{*}{\begin{tabular}[c]{@{}c@{}}Validation\\Categories\end{tabular}}  & {\cellcolor[rgb]{1,0.859,0.659}}Lamp         & {\cellcolor[rgb]{0.871,0.871,0.871}}Pistol & {\cellcolor[rgb]{0.82,0.706,0.996}}Mug   & {\cellcolor[rgb]{0.78,1,0.863}}Skateboard   \\
                                                                                 & {\cellcolor[rgb]{0.871,0.992,0.749}}Earphone & {\cellcolor[rgb]{0.918,0.525,0.792}}Rocket & {\cellcolor[rgb]{0.678,0.98,1}}Cap       &                                             \\
\bottomrule
\end{tabular}
\end{table}

\begin{figure*}
    \centering
    \includegraphics[width=\linewidth]{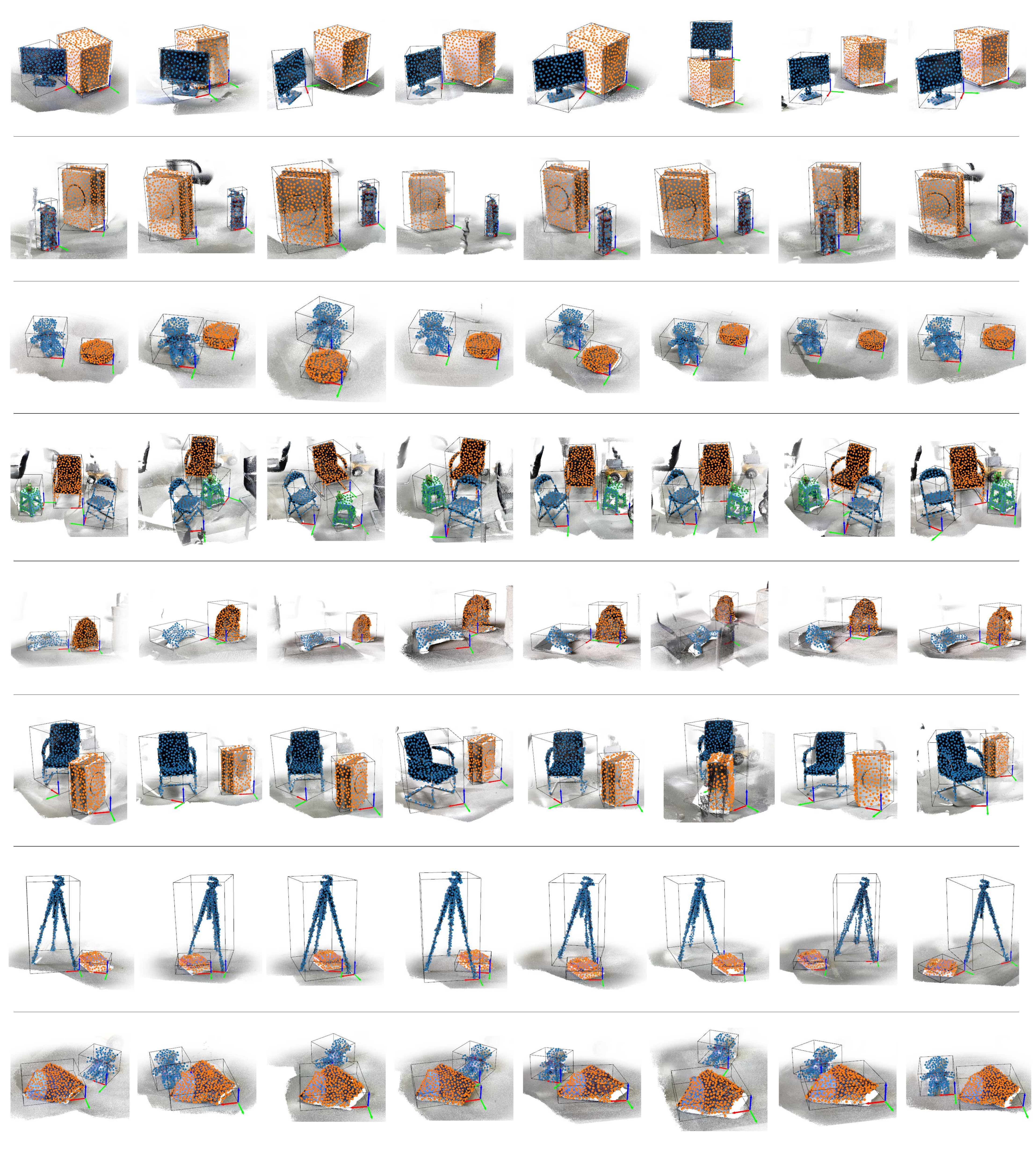}
    \caption{Visualization of the \dataset dataset. Each row shows 8 different dynamic configurations of the same set of rigid objects. Annotated bounding boxes are parallel to the ground plane and reflect the objects' absolute poses.}
    \label{fig:dynlab-vis}
\end{figure*}

The point clouds to register can have a large difference in poses making it hard for the flow network to recover. This might lead to wrong results in the subsequent steps.
Inspired by point cloud registration works~\cite{yi2018deep,gojcic2020learning}, during test time we iterate our pipeline several times to gradually refine the correspondence and segmentation estimation.
In particular, we use the transformation $\T_s^{k^*}(\T_s^k)^{-1}$ estimated at iteration $t-1$ to transform all the points in all point sets belonging to part $s$ to the \emph{canonical} pose of the $k^*$-th point cloud. Note that the choice of $k^\star$ is arbitrary, and we choose $k^\star=1$.
Then at iteration $t$, we feed the transformed point clouds to the flow network again to compute the residual flow, which is added back onto the flow predicted at iteration $t-1$ to form the input of the segmentation network.
The progress works reciprocally, as differences in poses of the point clouds are gradually minimized and the flow estimation will hence become more accurate, leading to better segmentation and transformations.
Specially, during the first iteration where pose differences are usually large, we treat the point clouds as if they are composed of only one rigid part to globally align the shapes.
This will provide a good pose initialization for subsequent iterations.

\subsection{Dataset}

\parahead{Training Data}
To demonstrate the generalizability of our method across different semantic categories, we ensure the categories used for training, validation and test have no overlap.
For articulated objects, the categories we use are shown in \cref{tbl:sapien-training}.
For multiple solid objects, the categories are listed in \cref{tbl:whole-training}.
Examples from our training set are visualized in \cref{fig:training-vis}.

\parahead{\dataset dataset}
A full visualization of the \dataset dataset with manual annotations is shown in~\cref{fig:dynlab-vis}.
We will make the scans publicly available.

\section{Additional Results}
\label{supp:sec:results}

\subsection{Extended Ablations}

In this subsection we provide more complete ablations extending \refpaper{subsec:exp:ablation}.
A full listing of the baselines we compare is as follows:
\begin{itemize}[topsep=0pt, itemsep=0pt]
    \item \textbf{Ours (1 iter)}: The pipeline is iterated only once, without the global alignment step as described in \cref{supp:subsec:iter}.
    \item \textbf{Ours (NS, NW)}: Same as the main paper, we directly feed $\flow^{kl}$ instead of $\hat{\flow}^{kl}$ to the motion network $\segnet$.
    \item \textbf{Ours (S, NW)}: Same as the main paper, we set all weights of the permutation synchronization $w^{kl}=1$.
    \item \textbf{Ours (UNZ)}: The unnormalized matrix $\ZW$~(\cref{eq:ZW}) is used as input to motion segmentation synchronization, \ie, the normalizing factors are set to $\sigma^{kl}=1$.
    \item \textbf{Ours (4 iters)}: Full pipeline of our method, with 4 steps of iterative refinement.
\end{itemize}

\begin{figure}[!t]
    \centering
    \includegraphics[width=\linewidth]{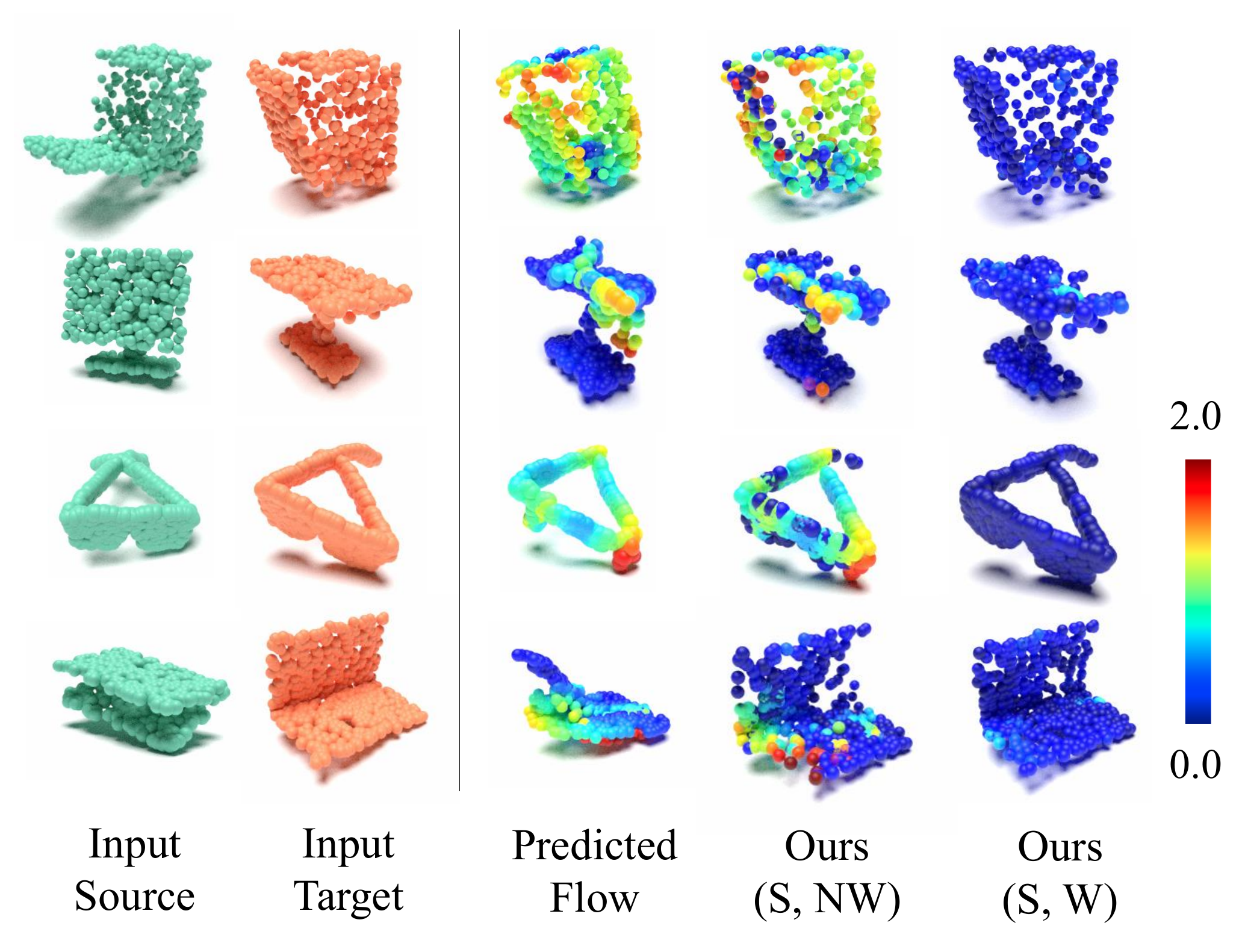}
    \caption{Visual comparisons of the pairwise flow. To visualize the flow we warp the source point cloud and compare the its similarity with the target point cloud.  The color bar on the right shows the end-point error (EPE3D). `Ours (S, W)' represents the output of our method with the \textbf{W}eighted permutation \textbf{S}ynchronization scheme.}
    \label{fig:sync}
\end{figure}

We show comparisons of the final rigid flow error using EPE3D metric on both SAPIEN and \dataset dataset in \cref{fig:ablation-flow-sapien,fig:ablation-flow-dynlab} respectively.
Results indicate that all the components introduced in our algorithm, including iterative refinement, weighted synchronization, and the pre-factoring of the motion segmentation matrix, contribute to the improvement of accuracy under different scenarios.
Note that in \dataset dataset, the performance of `Ours (UNZ)' is very similar to `Ours (4 iters)' because the motion segmentation accuracy is already high~(\refpaper{tbl:ours-segm}) due to the good quality of each individual $\Znet$ output, rendering normalization optional in practice.

We provide additional visual examples demonstrating the effectiveness of our weighted permutation synchronization in \cref{fig:sync}, where direct flow output fails due to large pose changes between the input clouds, and a naive unweighted synchronization still suffers from such failure because the influence of wrong correspondences is not eliminated.

\begin{table*}
\centering
\caption{Per-category mIoU comparisons on SAPIEN~\cite{Xiang_2020_SAPIEN} dataset.}
\label{tbl:sapien-category}
\small
\setlength{\tabcolsep}{7.0pt}
\begin{tabular}{l|cccccccccc} 
\toprule
                         & Box  & Dishwasher & Display & \begin{tabular}[c]{@{}c@{}}Storage\\Furniture \end{tabular} & Eyeglasses & Faucet & Kettle & Knife & Laptop & Lighter  \\ 
\hline
PointNet++~\cite{qi2017pointnet++}               & 43.5 & 46.8       & 54.8    &  51.3 & 34.6       & 42.4   & 65.7   & 43.0  & 58.5   & 52.3           \\
MeteorNet~\cite{liu2019meteornet}                & 47.0 & 42.2       & 41.7   &  36.9 & 36.1       & 47.1   & 67.2   & 36.2  & 57.9   & 61.3       \\
DeepPart~\cite{yi2018deep}                 & 53.3 & 55.1       & 47.4    &  48.7 & 31.8       & 43.4    & 64.7   & 38.5  & 67.3   & 39.0          \\
NPP~\cite{hayden2020nonparametric}                      & 41.4 & 63.7       & 57.3    &  48.0 & 35.3       & 45.4   & 50.7   & 44.5  & 61.1   & 50.7          \\ 
\hline
Ours (1 iter)            & 67.6 & 57.3       & 66.3    &  68.1 & 57.8       & 54.7   & 83.3   & 55.5  & 78.6   & 52.0       \\
Ours (NS, NW)            & 67.1 & 61.6       & 62.6    &  67.5 & 60.6       & 50.3   & 78.3   & 53.6  & 77.5   & 51.5        \\
Ours (S, NW)             & 71.4 & 58.9       & 68.8    &  71.3 & 61.7       & 57.2   & 81.4   & 57.8  & 82.7   & 64.6        \\
Ours (UNZ)               & 71.5 & 59.6       & 69.1    &  71.6 & 62.1       & 58.0   & 78.9   & 57.8  & 82.7   & 65.0        \\
\textbf{Ours (4 iters) } & 72.0 & 62.0       & 67.4    &  73.1 & 66.2       & 56.2   & 80.7   & 56.4  & 83.3   & 62.6     \\
\bottomrule
\end{tabular}

\vspace{1em}

\setlength{\tabcolsep}{6.5pt}
\begin{tabular}{l|cccccccccc|c} 
\toprule
                          & Microwave & Oven & Phone & Pliers & Safe & Stapler & Door & Toilet & TrashCan & \begin{tabular}[c]{@{}c@{}}Washing\\Machine \end{tabular} & Overall  \\ 
\hline
PointNet++~\cite{qi2017pointnet++}                & 51.5      & 42.6 & 46.2  & 63.6   & 55.7 & 43.0    & 42.7                                                        & 40.0   & 51.2     & 49.8                                                      & 47.5     \\
MeteorNet~\cite{liu2019meteornet}                 & 37.4      & 37.1 & 41.7  & 43.4   & 33.7 & 54.7    & 33.3                                                        & 38.3   & 61.5     & 41.9                                                      & 43.7     \\
DeepPart~\cite{yi2018deep}                  & 65.9      & 49.8 & 41.9  & 32.9   & 57.5 & 47.0    & 38.6                                                        & 39.1   & 65.1     & 59.5                                                      & 49.2     \\
NPP~\cite{hayden2020nonparametric}                       & 56.4      & 39.7 & 48.4  & 61.3   & 55.9 & 45.5    & 40.4                                                        & 31.2   & 51.0     & 48.4                                                      & 48.2     \\ 
\hline
Ours (1 iter)             & 61.6      & 54.7 & 52.5  & 50.6   & 59.4 & 67.0    & 47.1                                                        & 55.7   & 79.3     & 64.2                                                      & 62.9     \\
Ours (NS, NW)             & 74.6      & 59.0 & 49.4  & 57.0   & 62.2 & 65.6    & 45.1                                                        & 52.0   & 76.1     & 72.9                                                      & 63.3     \\
Ours (S, NW)              & 62.8      & 52.3 & 54.1  & 51.4   & 62.5 & 72.0    & 49.0                                                        & 57.2   & 81.1     & 71.2                                                      & 65.6     \\
Ours (UNZ)                & 62.7      & 52.2 & 55.1  & 49.9   & 61.3 & 72.3    & 48.8                                                        & 57.5   & 81.4     & 71.4                                                      & 65.8     \\
\textbf{Ours (4 iters) }  & 69.3      & 56.1 & 54.6  & 63.9   & 63.9 & 70.2    & 48.3                                                        & 56.5   & 80.4     & 72.1                                                      & 66.7     \\
\bottomrule
\end{tabular}

\end{table*}
For completeness we include per-category segmentation accuracy of articulated objects on SAPIEN~\cite{Xiang_2020_SAPIEN} dataset in \cref{tbl:sapien-category}.
The variants of our method perform consistently better than other methods for nearly all categories, showing the robustness of our model for accurate multi-scan motion-based segmentation.

\begin{figure}[t]
    \centering
    \includegraphics[width=0.8\linewidth]{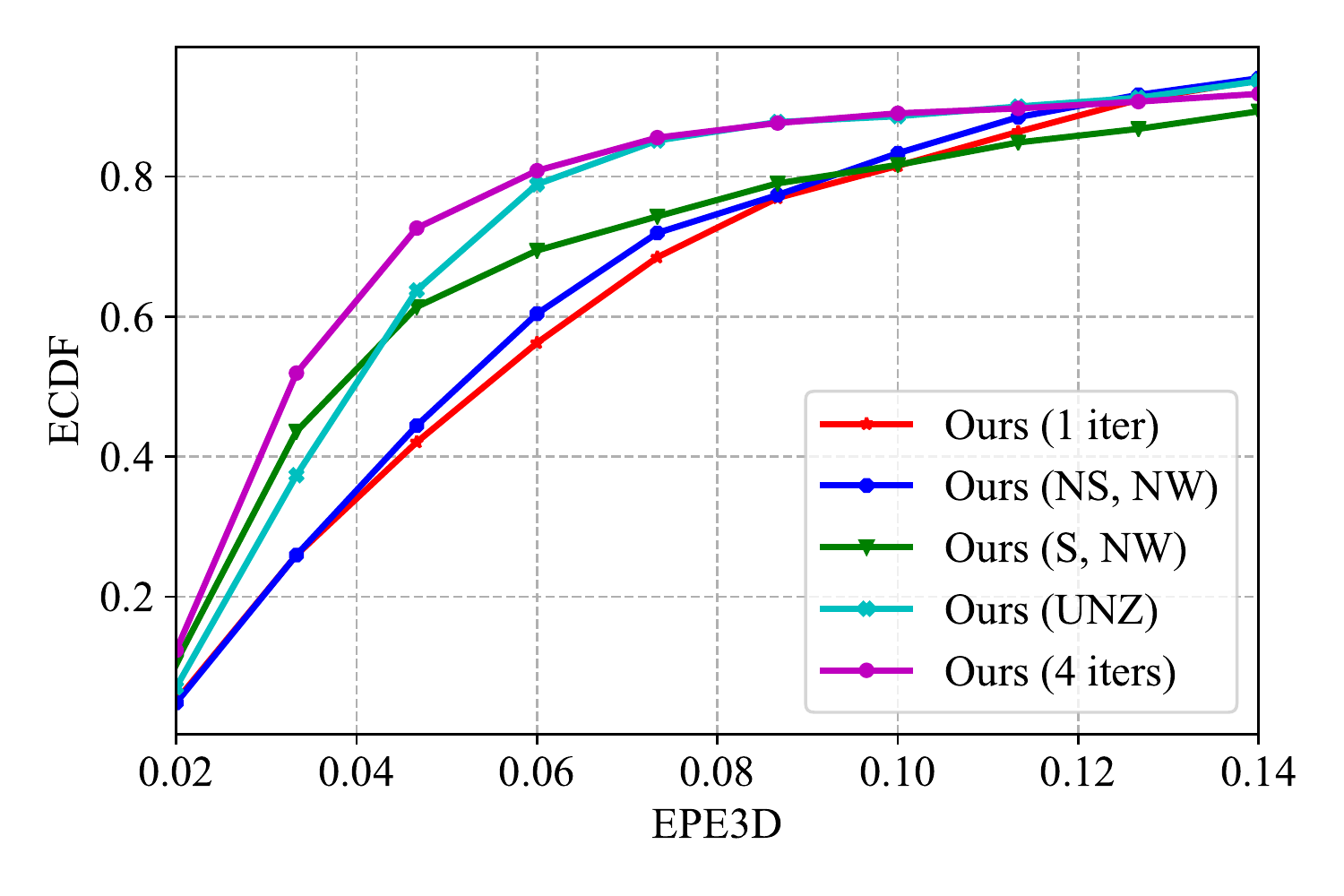}
    \caption{Empirical cumulative distribution function (ECDF) of rigid flow error (EPE3D) on SAPIEN~\cite{Xiang_2020_SAPIEN} dataset. The higher the curve, the better the results.}
    \label{fig:ablation-flow-sapien}
\end{figure}

\begin{figure}[t]
    \centering
    \includegraphics[width=0.8\linewidth]{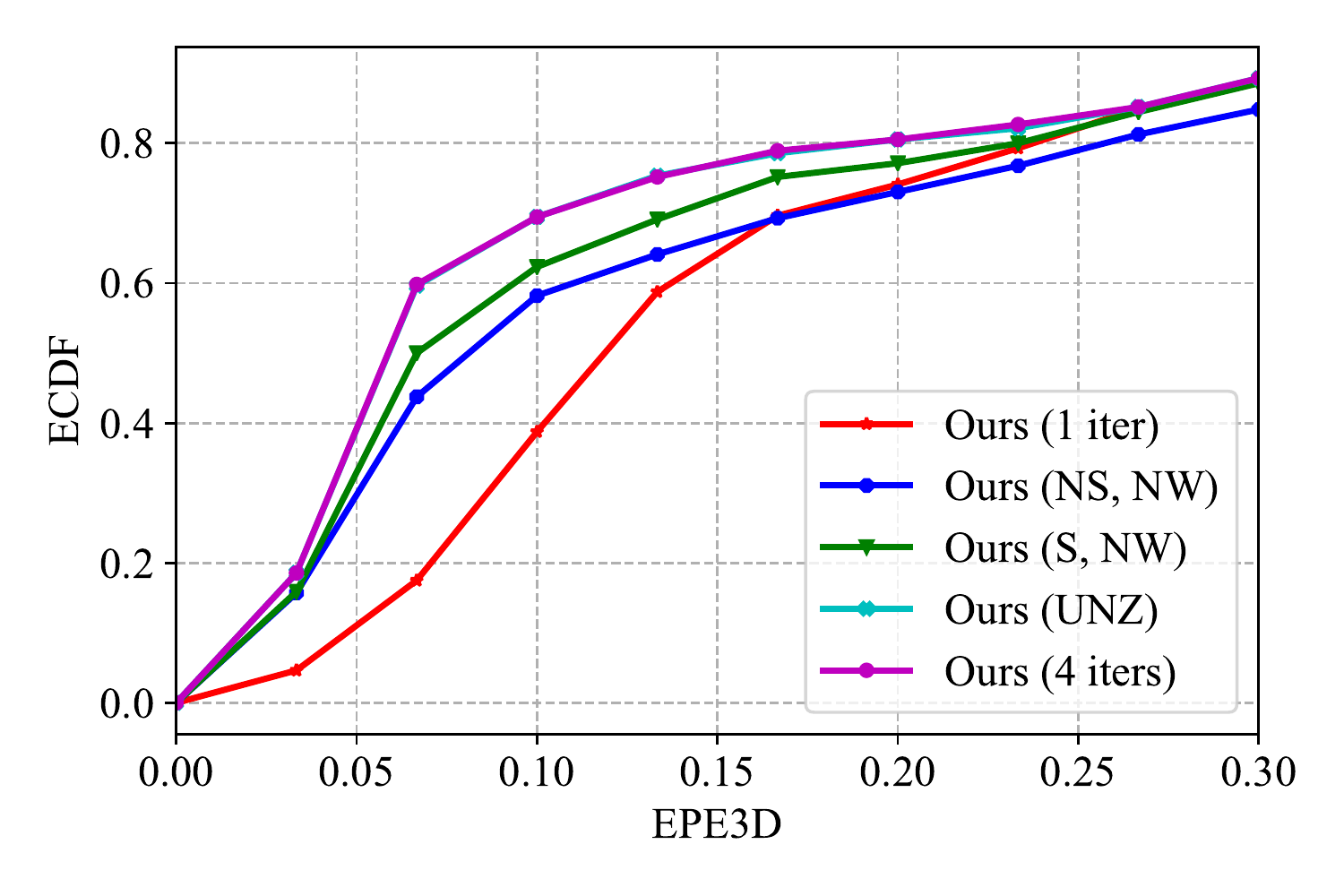}
    \caption{Empirical cumulative distribution function (ECDF) of rigid flow error (EPE3D) on \dataset dataset. The higher the curve, the better the results.}
    \label{fig:ablation-flow-dynlab}
\end{figure}

\begin{figure}[!t]
    \begin{center}
       \includegraphics[width=0.95\linewidth]{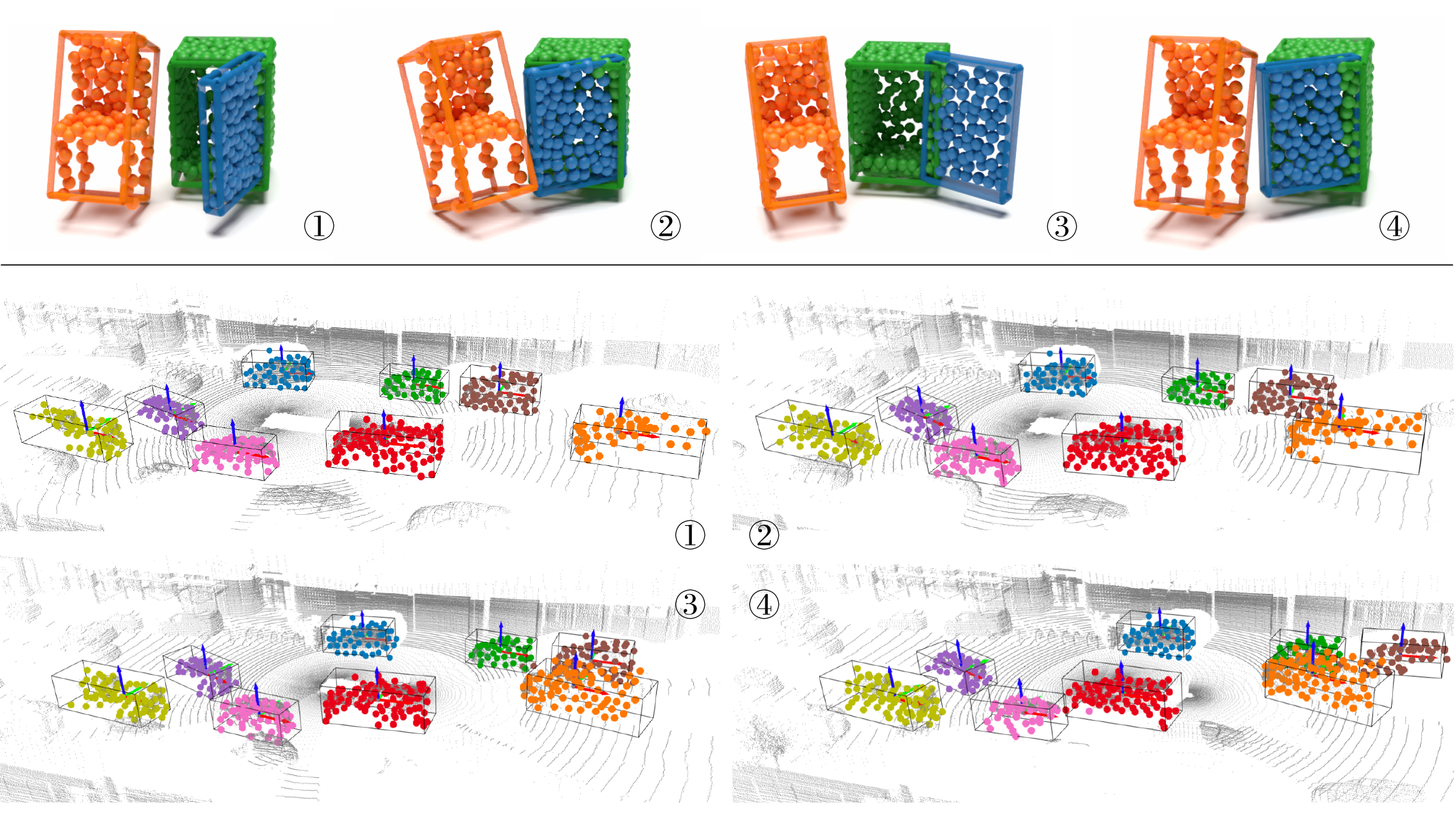}
    \end{center}
   \vspace{-1em}
    \caption{Quantitative demonstrations on complex scans. The first row is estimated using our trained articulated objects model while the last row is obtained by hierarchically apply our method to each segmented part until convergence. \ding{172}-\ding{175} indicates scan index. Best viewed with 200\% zoom in.}
 \label{fig:hier}
 \vspace{-4em}
 \end{figure}

\subsection{Qualitative Results}

To provide the readers with a more intuitive understanding of our performance under different cases, we illustrate in \cref{fig:hier} the scenarios with co-existing articulated/solid objects and multiple cars in a scene of Waymo Open dataset~\cite{sun2020scalability} (though the car category is within our training set).
Moreover, we show in~\cref{fig:res-col0,fig:res-col1,fig:res-col2} our segmentation and registration results for each category in SAPIEN~\cite{Xiang_2020_SAPIEN} dataset, covering most of the articulated objects in real world.
Due to the irregular random point sampling pattern and the natural motion ambiguity, in some examples, our method may generate excessive rigid parts, which can be possibly eliminated by a carefully-designed post-processing step and is out of the scope of this work.
We also show results from the \dataset dataset in~\cref{fig:res-col3}.
Our method can generate robust object associations under challenging settings.

\clearpage

 \begin{figure}
    \centering
    \includegraphics[width=\linewidth]{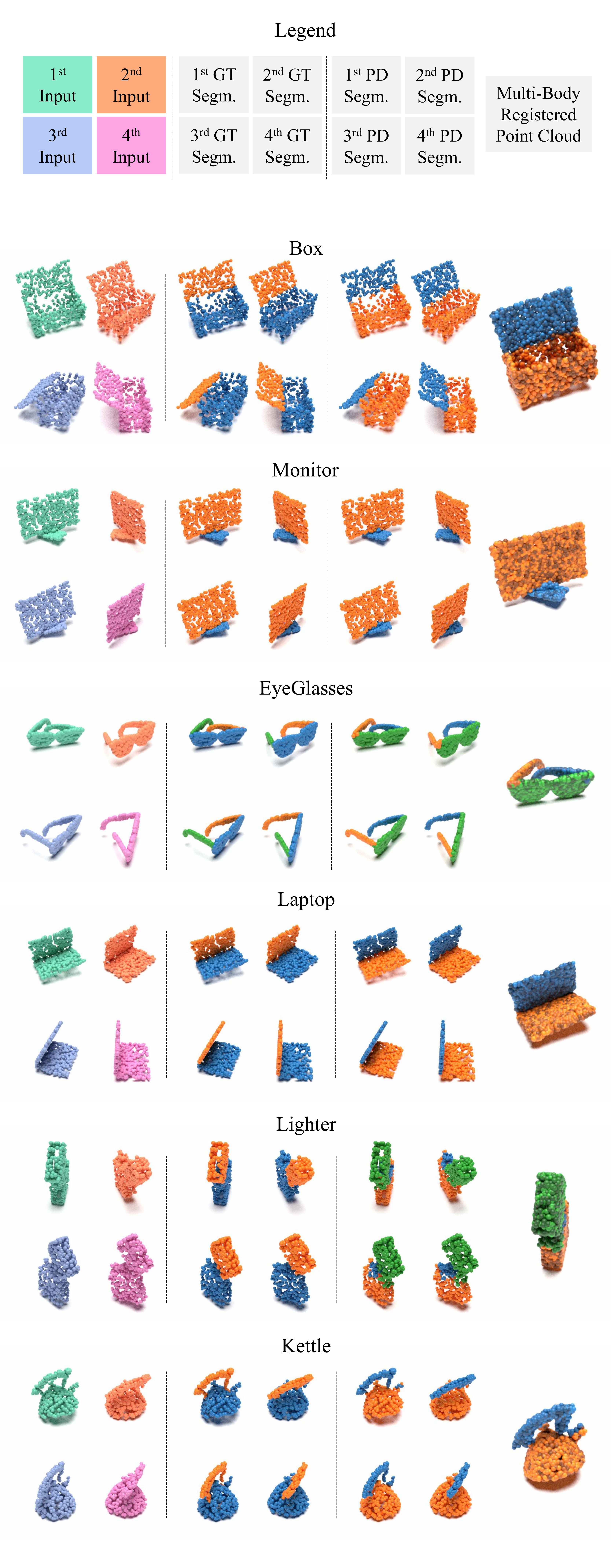}
    \caption{Qualitative results on SAPIEN dataset (1/3).}
    \label{fig:res-col0}
\end{figure}

\begin{figure}
    \centering
    \includegraphics[width=\linewidth]{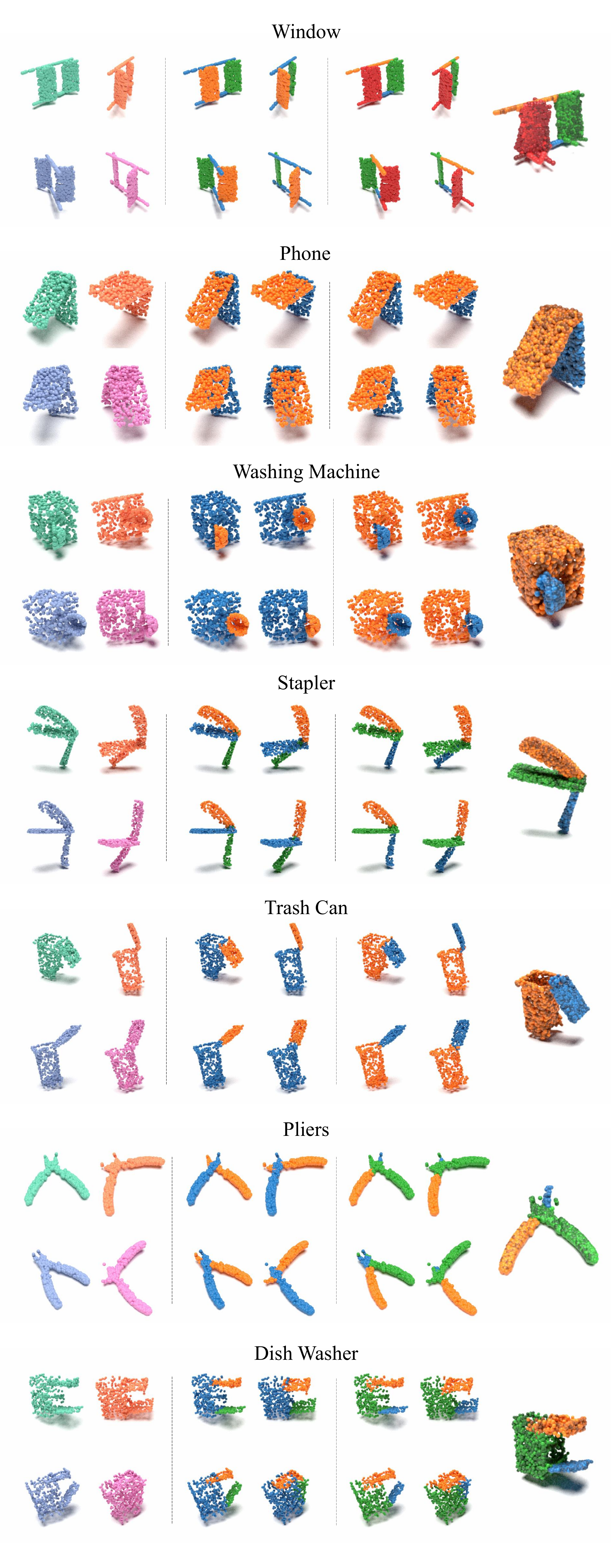}
    \caption{Qualitative results on SAPIEN dataset (2/3).}
    \label{fig:res-col1}
\end{figure}

\begin{figure}
    \centering
    \includegraphics[width=\linewidth]{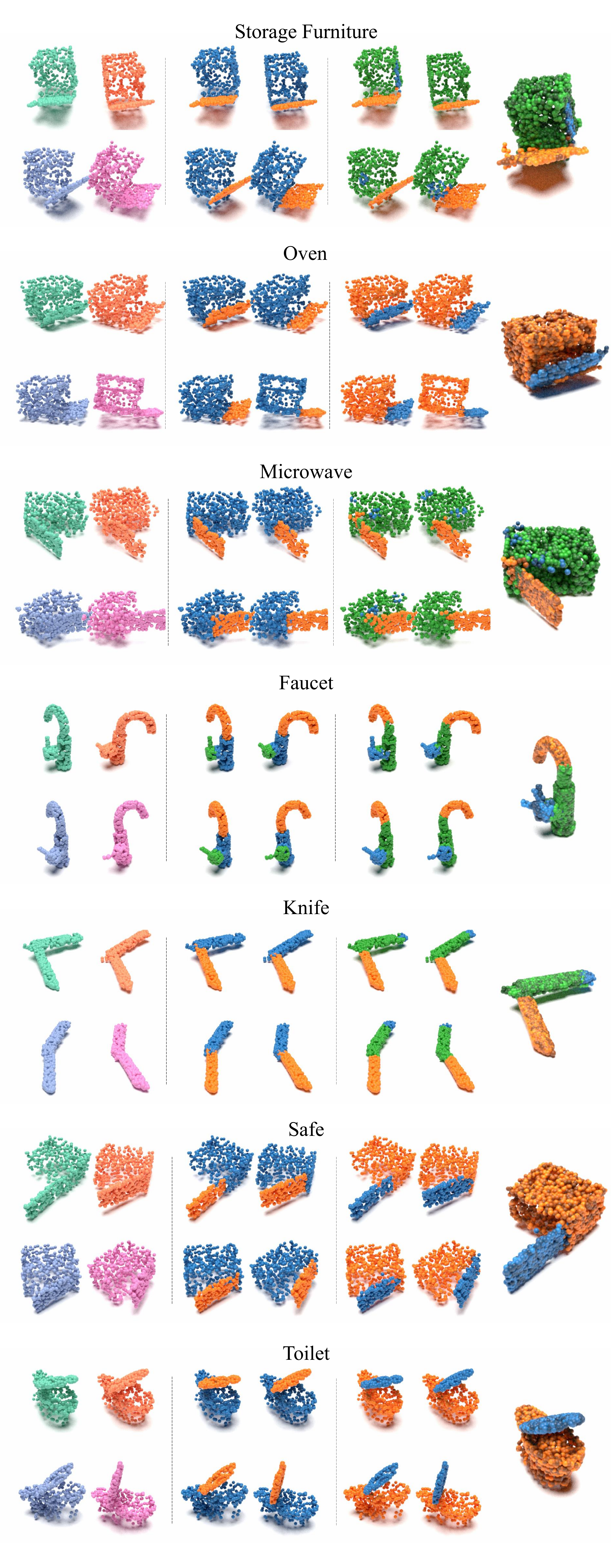}
    \caption{Qualitative results on SAPIEN dataset (3/3).}
    \label{fig:res-col2}
\end{figure}

\begin{figure}
    \centering
    \includegraphics[width=\linewidth]{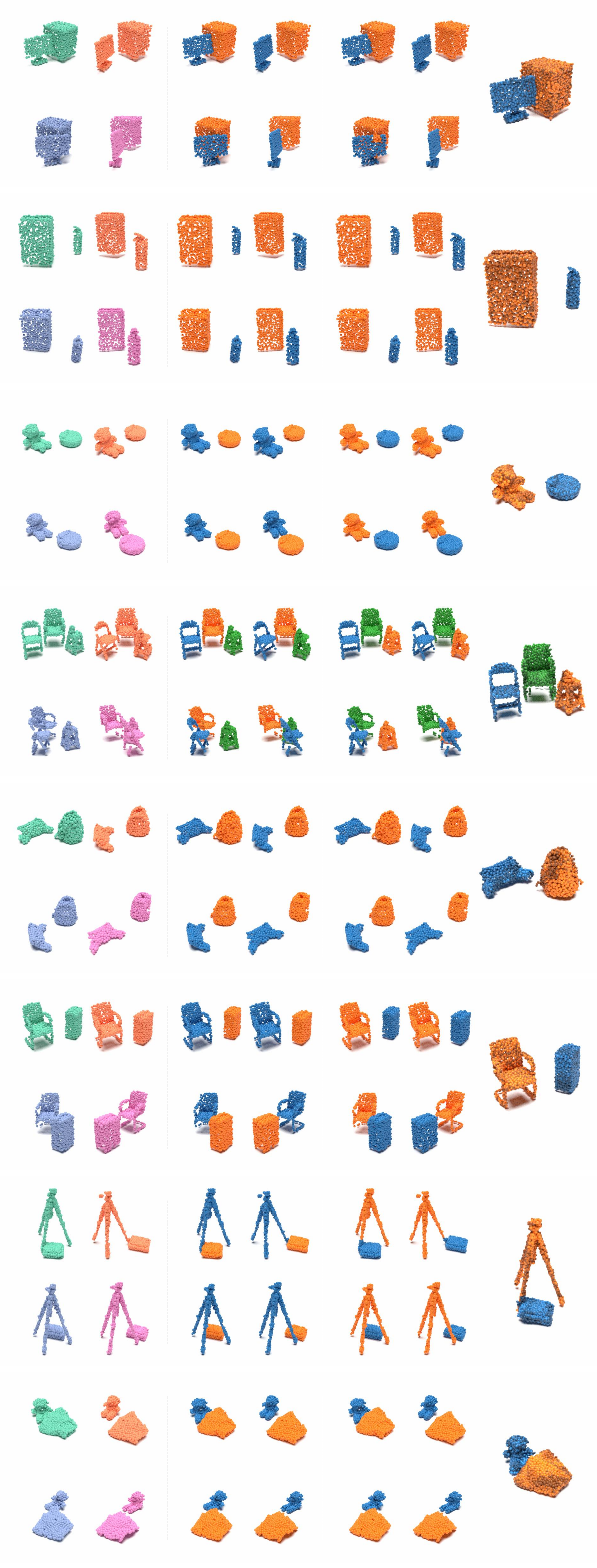}
    \caption{Qualitative results on \dataset dataset.}
    \label{fig:res-col3}
\end{figure}

\end{document}